\documentclass{article}

% if you need to pass options to natbib, use, e.g.:
%     \PassOptionsToPackage{numbers, compress}{natbib}
% before loading neurips_2023

% ready for submission
%\usepackage{neurips_2023}

% to compile a preprint version, e.g., for submission to arXiv, add add the
% [preprint] option:
\usepackage[preprint, nonatbib]{neurips_2023}

% to compile a camera-ready version, add the [final] option, e.g.:
%     \usepackage[final]{neurips_2023}

% to avoid loading the natbib package, add option nonatbib:
%\usepackage[nonatbib]{neurips_2023}

\usepackage[utf8]{inputenc} % allow utf-8 input
\usepackage[T1]{fontenc}    % use 8-bit T1 fonts
\usepackage{hyperref}       % hyperlinks
\usepackage{url}            % simple URL typesetting
\usepackage{booktabs}       % professional-quality tables
\usepackage{amsfonts}       % blackboard math symbols
\usepackage{nicefrac}       % compact symbols for 1/2, etc.
\usepackage{microtype}      % microtypography
\usepackage{xcolor}         % colors

\usepackage{amsmath}
\usepackage{amsthm}
\usepackage{amssymb}
\usepackage{wrapfig}
\usepackage{enumitem}
\usepackage{tikz}
\usepackage{booktabs}
\usepackage{float}
\usepackage{url}
\usepackage{subcaption}

\setlength{\parskip}{0.2cm plus3mm minus3mm}
%\setcitestyle{square}
%\setcitestyle{citesep={,}}

\theoremstyle{plain}

\newenvironment{manualtheorem}[1]{%
  \manualtheoreminner
}{\endmanualtheoreminner}

\newenvironment{manualprop}[1]{%
  \manualpropinner
}{\endmanualpropinner}

\newtheorem{theorem}{Theorem}[section]
\newtheorem{proposition}[theorem]{Proposition}

\theoremstyle{definition}
\newtheorem{definition}[theorem]{Definition}

\theoremstyle{remark}

\title{Higher Order Gauge Equivariant CNNs on Riemannian Manifolds and Applications}

% The \author macro works with any number of authors. There are two commands
% used to separate the names and addresses of multiple authors: \And and \AND.
%
% Using \And between authors leaves it to LaTeX to determine where to break the
% lines. Using \AND forces a line break at that point. So, if LaTeX puts 3 of 4
% authors names on the first line, and the last on the second line, try using
% \AND instead of \And before the third author name.

\author{%
  Gianfranco Cort\'{e}s\textsuperscript{\textdagger}\thanks{Correspondence to: \texttt{gcortes@ufl.edu}} 
  \quad
   \textbf{Yue Yu}\thanks{Department of CISE, University of Florida, Gainesville, FL}
   \quad
   %Department of CISE \\
   %University of Florida \\
   %\And
   \textbf{Robin Chen}\thanks{Department of Biomedical Engineering, University of Florida, Gainesville, FL}
   %Department of BME \\
   %University of Florida \\ \\
   %\\
   %\quad
   \\
   \textbf{Melissa Armstrong}\thanks{Department of Neurology, University of Florida, Gainesville, FL} 
   %Department of Neurology \\
   %University of Florida \\
   \quad
   \textbf{David Vaillancourt}\thanks{Department of Applied Physiology and Kinesiology, University of Florida, Gainesville, FL} 
   %Department of APK \\
   %University of Florida \\
   \quad
   \textbf{Baba C. Vemuri}\textsuperscript{\textdagger} \\
   %Department of CISE \\
   %University of Florida \\
}

\raggedbottom

\begin{document}

\maketitle
\begin{abstract}
With the advent of group equivariant convolutions in deep networks literature, spherical CNNs with $\mathsf{SO}(3)$-equivariant layers have been developed to cope with data that are samples of signals on the sphere $S^2$.
One can \textit{implicitly} obtain $\mathsf{SO}(3)$-equivariant convolutions on $S^2$ with significant efficiency gains by \textit{explicitly} requiring gauge equivariance w.r.t. $\mathsf{SO}(2)$. In this paper, we build on this fact by introducing a higher order generalization of the gauge equivariant convolution, whose implementation is dubbed a \textit{gauge equivariant Volterra network} (GEVNet). This allows us to model spatially extended nonlinear interactions within a given receptive field while still maintaining equivariance to global isometries. We prove theoretical results regarding the equivariance and construction of higher order gauge equivariant convolutions. Then, we empirically demonstrate the parameter efficiency of our model, first on computer vision benchmark data (e.g. spherical MNIST), and then in combination with a convolutional kernel network (CKN) on neuroimaging data. In the neuroimaging data experiments, the resulting two-part architecture (CKN + GEVNet) is used to automatically discriminate between patients with Lewy Body Disease (DLB),  Alzheimer's Disease (AD) and Parkinson's Disease (PD) from diffusion magnetic resonance images (dMRI). The GEVNet extracts micro-architectural features within each voxel, while the CKN extracts macro-architectural features across voxels. This compound architecture is uniquely poised to exploit the intra- and inter-voxel information contained in the dMRI data, leading to improved performance over the classification results obtained from either of the individual components.
%results depicting comparitive performance of the proposed architectures are presented for both vision and neuroimaging data.

\end{abstract}

\section{Introduction}

\subsection{Related Literature in Computer Vision} 

Many applications in computer vision call for the analysis of signals sampled on an underlying manifold that is non-Euclidean. A ubiquitous example of such a non-Euclidean space is the 2-sphere $S^2$. Atmospheric signals such as temperature and wind fields, measurements of the cosmic microwave background, and omnidirectional images captured by fish-eye lenses are all examples of signals whose domain ought to be modeled by $S^2$. It is desirable for CNNs that process spherical signals to be equivariant to rotations, i.e. equivariant to the action of the Lie group $\mathsf{SO}(3)$. This is analogous to the translation equivariance satisfied by planar CNNs.

The need to generalize CNNs to homogeneous spaces such as $S^2$ and even arbitrary Riemannian manifolds has since been answered by a flurry of techniques arising from an emerging subfield known as geometric deep learning (GDL) \cite{5gs}. Here we review but a fraction of the relevant GDL literature. 

Previously published CNNs that are equivariant to 2D and 3D rotations were reported in \cite{Worrall16, WeilerCesa, bekkers_roto-translation_2018} and \cite{Cohen-ICML17, Cohen-ICLR18, esteves2017polar,CGNet}, respectively. More recently, work aimed at generalizing beyond the plane and sphere was reported in \cite{Kondor-icml18,BanerjeeISBI19,CohenNIPS19, Bekkers2020B-Spline}, in which their networks can in theory cope with arbitrary homogeneous spaces. From here, group convolutions on homogeneous spaces were pushed further along two diverging research directions of relevance, which this paper sets out to merge together. In one direction, the higher order analogue (Volterra expansion) of convolution on homogeneous spaces was introduced in \cite{Banerjee-TPAMI20}, in which the parameter efficiency of higher order convolutions was demonstrated. In the other direction, the condition of having a homogeneous base space was relaxed, giving rise to the gauge equivariant convolutions presented in \cite{Cohen-Weiler-Kicanaoglu-Welling2019, dehaan2021, Weiler2021} which are valid on arbitrary Riemannian manifolds. It is still valuable to consider gauge equivariant convolutions on homogeneous spaces, as they afford high spatial resolutions while maintaining feature maps of low bandwidth (yielding better computational/memory efficiency). This is in stark contrast to group convolutional techniques (\`a la Fourier), where the spatial resolution and feature map bandwidth are invariably coupled together.

\subsection{Diffusion MRI and Neurology Application}

%In this section, we first describe why the chosen data space in this neuro-imaging application is most naturally suitable for deploying the geometric deep learning methods like the existing spherical CNN and the novel GEVnet presented in this paper.  Following this we will briefly describe the application problem of classification of neuro-degenerative disorders.

\paragraph{Diffusion MRI:} Diffusion weighted magnetic resonance  imaging (dMRI) is a non-invasive imaging technique that provides a way to probe the axonal fiber connectivity in the body by making the magnetic resonance (MR) signal sensitive to water diffusion through the tissue being imaged \cite{basser1994mr}. Typically, diffusion sensitizing magnetic field gradients are applied along a large number of directions and the response MR signal is collected at {\it each voxel} along these
directions. Thus, for each direction, the data contains an entire 3D MR volume. This amounts to the existence of a function $f: S^2 \times \mathbb{R}^{+} \longrightarrow \mathbb{R}$ at each voxel, assigning an intensity value to the voxel for a given direction and magnitude (of the applied magnetic field gradients). In our work, we fix the magnitude (known as a single-shell model), and this reduces us to the existence of a function $f:S^2 \longrightarrow \mathbb{R}$, or a scalar field on $S^2$. It follows that any end-to-end network processing dMRI data ought to contain a module that performs spherical convolutions within voxels if we are to respect the geometry of the data.
%Let $S({\bf q})$, denote the scalar valued signal at a voxel in the 3D image along a radial vector ${\bf q}$ in the Fourier (frequency) space. Since ${\bf q}$ is a radial vector with a magnitude and a direction, a natural mathematical space for representing this signal $S({\bf q})$ is then by functions whose domain is the product space, $\mathbf{S}^{2} \times \Re^{+}$. Where, $\mathbf{S}^2$ is the 2-dimensional sphere used to represent the direction $\mathbf{q}/||\mathbf{q}||$ and $\Re^{+}$ is the space of
%positive reals used for representing the magnitude
%$||\mathbf{q}||$. This product space is an example of what is known as a Riemannian homogeneous space \cite{helgason1962differential}. Intuitively, a homogeneous space, is one which `looks'' the
%same locally at all points in it with respect to some geometric property such as isometry, diffeomorphism etc. Some examples of homogeneous spaces include, the Euclidean space, the n-dimensional
%sphere ($\mathbf{S}^{n}$), hyperbolic space and others.  In our application, we will fix the value of $||q||$ and hence, the data at each voxel (in the MR image) $S(\mathbf{q})$ are samples from functions on $S^2$. Thus, at each voxel, the data are a {\it hand in glove} fit for
%application of the SCNN as well as the proposed GEVnet in
%this paper. 
\
%\vspace{-20pt}
\paragraph{Neurology Application:} Dementia with Lewy Bodies (DLB), Alzheimer's Disease (AD) and Parkinson's Disease (PD)  are common forms of neurodegenerative disorders. DLB is 
%one of the AD-related dementias and part of 
the second most common dementia in the U.S., but 1 in 3 cases may
be missed and individuals with DLB are frequently misdiagnosed in early stages, most commonly as AD or PD dementia \cite{ThomasAJ2017}. Therefore, neurologists face a challenging diagnostic task and may be assisted with an automated classifier trained on dMRI data to discriminate between DLB, AD and PD groups. To the best of our knowledge, there are no end-to-end networks reported in literature for this specific neurology application.

%making it a very challenging task diagnostically. This motivates the use of an automated  classifier to provide  diagnostic assistance possibly using dMRI brain scans. 
%Imaging approaches maybe able to differentiate individuals with DLB, AD and PD dementia. One mechanism to better understand the biological differences between presenting phenotypes is possibly via dMRI data. 

%In this paper we will focus on developing a novel end-to-end GDL network to achieve this discrimination from dMRI scans. To the best of our knowledge, there are no end-to-end networks reported in literature for this specific application in neuroimaging.
%\vspace{-10pt}
There are several approaches to feature extraction from dMRI data that have had moderate to good success in  the classification of neurodegenerative disorders. Here we simply cite a few representative methods that include the use of scalar-valued indices and more sophisticated morphometric indices  \cite{Prodoehl2013,Ozarslan05,pasternakMRM09,NOODI-Alexander,BanerjeeCVPR2016}.
%Begining from Diffusion Tensor Imaging (DTI) based scalar and vector-valued features \cite{Prodoehl2013} to more sophisticated features such as generalized anisotropy  \cite{Ozarslan05}, free-water fraction derived from dMRI  \cite{pasternakMRM09}, return to origin probability (RTOP) \cite{Ozarslan2013}, ensemble average propagators (EAPs) based morphometry \cite{BanerjeeCVPR2016}, and many others have been introduced that overcome the deficiencies of DTI based indices. 
All of these features however are "hand-crafted" and not learned from the data. %Learning features from data is now the de facto standard in various research communities as the way forward for classification and many other problems. 
%To date, we are not aware of any novel deep neural network based solutions for classification of DLB vs. AD and DLB vs. PD from dMRI scans.
In \ref{medical_expt}, we propose a novel end-to-end network composed of a \textit{convolutional kernel network} with quadratic kernels (CKN2) and a \textit{gauge equivariant Volterra network} (GEVNet) on $S^2$ to be trained on dMRIs. 
%We find that performing higher order convolutions both across (using the CKN2) and within (using the GEVNet) voxels yields the best classification results in our application.

%involving a combination of {\it Higher-order Spherical CNN (GEVnet) and a Convolutional Kernel Network (CKN)} with improved model capacity and parameter efficiency that will drive the classification. The {\it GVEnet+CKN} will capture,
%micro-architectural features within a voxel with a novel {\it GEVnet and macro-architectural features across voxels
%using a CKN.}
%\vspace{-10pt}
{\bf Our Contributions:} The key contributions of this work are: (1) a higher order generalization of the first order gauge equivariant convolution introduced in \cite{Cohen-Weiler-Kicanaoglu-Welling2019}, (2) proofs regarding the equivariance and construction of higher order gauge equivariant convolutions, (3) the resulting implementation on $S^2$, dubbed a GEVNet, and (4) experimental results of the GEVNet on spherical benchmark tasks and a tandem CKN + GEVNet application for the classification of neurodegenerative disorders.

%In summary, the key contributions of our work presented here are: (1) A novel gauge equivaraint VolterraNet (GEVnet) to learn higher order features from neighborhoods on a sphere for spherical functions in computer vision applications and the micro-architectural features within a voxel in the dMRI data for the Neurology application.  The key distinction between earlier work on Gauge equivaraint Spherical CNN of
%Cohen et al. \cite{Cohen-Weiler-Kicanaoglu-Welling2019} is that, we develop the concept of gauge equivaraince in higher order convolutions. What does higher order convolution buy us?  It captures more information from within the
%neighborhood defined by the support of the kernel, has higher model capacity than standard first order CNN and this leads to higher parameter efficiency as we will see later in the experiment section.  (2) A theorem showing
%gauge equivaraince of the GEVnet. (3) A novel CKN implementing higher order convolutional neural network such as the VolterraNet \cite{Banerjee-TPAMI20}.

The rest of the paper is organized as follows: In Section \ref{background}, we present material on gauge equivariant and kernel convolutions, concepts that are heavily used throughout the paper. Section \ref{methods} contains our key theoretical contribution, namely, the higher order gauge equivariant convolution. 
This is followed by a remark on the relation between CKNs with polynomial kernels and higher order Volterra expansions. 
Section \ref{arch} presents some implementation details. Section \ref{expts} contains experimental results on benchmark computer vision datasets and a neuroimaging application. Finally, we draw conclusions in Section \ref{conc}. \vspace{2pt}

%the higher order gauge equivariance theorem and the proposition on the steerability of the higher order kernel. This is followed by a description of the CKN. In section \ref{expts}, we present experimental results.
%(i) classification (along with comparisons to SOTA) using GEVnet applied to public domain computer vision data (ii) classification of neuro-degenerative disorders (DLB, AD, PD) given dMRI brains scans from patients with these disorders. 

%\vspace{-15pt}
\section{Background} \label{background}
In this section, we briefly review the key definitions and results underlying the theory of gauge equivariant CNNs, as presented in \cite{Cohen-Weiler-Kicanaoglu-Welling2019} and \cite{Weiler2021}. Then, we summarize the construction of convolutional kernel networks presented in \cite{Mairal2016}. %These two concepts are central to understanding our work presented here.
%\pagebreak
\subsection{First Order Gauge Equivariant Convolutions}

\begin{wrapfigure}{r}{0.45\textwidth}
\vspace{-10pt}
\centering
\includegraphics[width=0.4\textwidth]{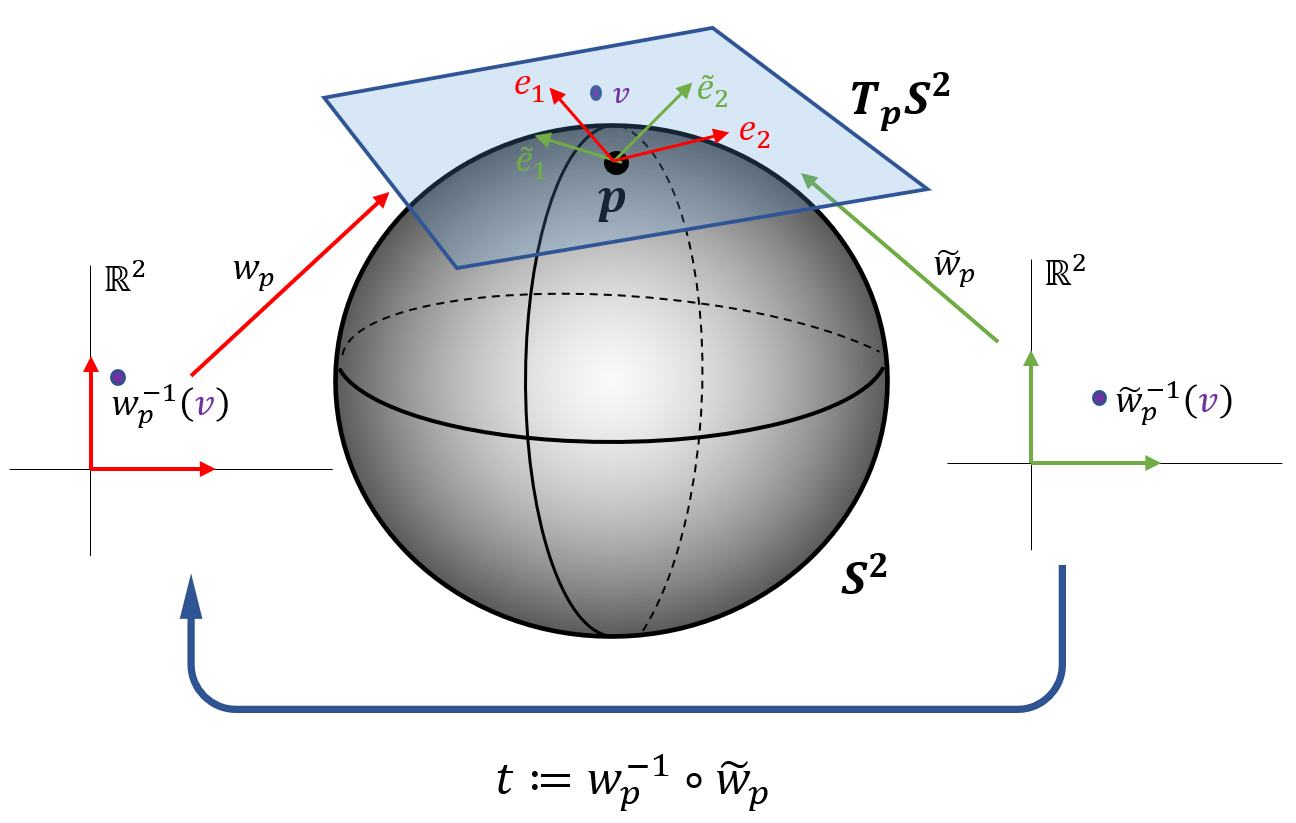}
\caption{Illustration of two gauges (red and green) and a gauge transformation between them (blue) }
\vspace{-10pt}
\label{gauge}
\end{wrapfigure}
Let $M$ be a connected, geodesically complete Riemannian manifold of dimension $2$. Recall that to each $p\in M$, we can associate the tangent space $T_pM$ and its dual $T_p^*M$ \cite{Lee1997}. A feature map $f$ on $M$ is modeled as a smooth tensor field on $M$, i.e. $f(p)$ is a tensor for each $p \in M$. We denote this as $f(p)\in T_pM^{\otimes r} \otimes T_p^*M^{\otimes s} =: \mathfrak{T} M\vert_p$, where $\otimes$ denotes a tensor product and the superscript, $^{\otimes r}$, denotes an $r$-fold tensor product. 
A convolution will then map a tensor field $f_{\mathrm{in}}$ on $M$ to a tensor field $f_{\mathrm{out}}$ on $M$. The reason for working with tensor-valued features will be made explicit below.

A tensor field $f$ on $M$ is a geometrically intrinsic construction that in general will lack a canonical global coordinatization. Therefore, to perform computations, we must locally assign a frame to each tangent space $T_pM$, which then induces a frame on the corresponding tensor space $\mathfrak{T}M\vert_p$. Such an assignment is called a \textit{gauge}, which is to be thought of as a (local) smoothly parameterized collection of linear isomorphisms $w_p:\mathbb{R}^2 \longrightarrow T_pM$. Given a frame $\{b_1, b_2\}$ on $\mathbb{R}^2$, $w_p$ induces a frame on $T_pM$ given by $\mathbf{e}_1 = w_p(b_1)$ and $\mathbf{e}_2 = w_p(b_2)$. Figure \ref{gauge} illustrates this concept.

Since the choice of gauge is arbitrary, we are forced to consider the transition between gauges via a \textit{gauge transformation}. At a point $p \in M$ with pointwise gauges $w_p:\mathbb{R}^2\longrightarrow T_pM$ and $\widetilde{w}_p:\mathbb{R}^2 \longrightarrow T_pM$, this amounts to considering the map $t := w_p^{-1} \circ \widetilde{w}_p$. The map $t$ acts naturally on frames and tangent vector coefficients, where in computations we use a corresponding change-of-basis matrix. 

For example, if our feature map $f$ is a vector field, then $f(p) \in T_pM$. A tangent vector $f(p) = \mathbf{v}$ is an intrinsic geometric object, but we can coordinatize $\mathbf{v}$ with respect to the pointwise gauges $w_p$ and $\widetilde{w}_p$ above, yielding coordinates $w_p^{-1}(\mathbf{v}) = a = (a_1, a_2)$ w.r.t. $\{\mathbf{e}_1, \mathbf{e}_2\}$ and $\widetilde{w}_p^{-1}(\mathbf{v}) = \widetilde{a} = (\widetilde{a}_1, \widetilde{a}_2)$ w.r.t. $\{\widetilde{\mathbf{e}}_1, \widetilde{\mathbf{e}}_2\}$. Under the assumption that $\widetilde{w}_p = w_p \circ t$, it must be the case that $\widetilde{a} = t^{-1}(a)$, so that $\mathbf{v} = \widetilde{w}_p(\widetilde{a}) = (w_p \circ t)(\widetilde{a}) = (w_p \circ t \circ t^{-1})(a) = w_p(a)$ is coordinate independent. The relation $\widetilde{a} = t^{-1}(a)$ is referred to as a tensor \textit{transformation law}, which describes how the coordinatization of a tensor changes w.r.t. a change of frame.

More generally, higher order tensors on a $2$-manifold have transformation laws of the form $\widetilde{a} = \rho(t^{-1})(a)$, where $\rho:G \longrightarrow \mathsf{GL}(2, \mathbb{R})$ is a group representation (i.e. $\rho(t_1t_2) = \rho(t_1)\rho(t_2)$ for all $t_1, t_2 \in G$) and $G$ is a Lie subgroup of $\mathsf{GL}(2, \mathbb{R})$ called the \textit{structure group}. As $\rho$ encodes the tensor transformation law, we also refer to $\rho$ as the \textit{tensor type}. In this work, we are only interested in orthonormal frames with positive orientation, so we will only be concerned with $G = \mathsf{SO}(2)$ from here on out. 

Since our feature maps are no longer scalar-valued and the base manifold is no longer Euclidean, there exist several differences between the classical convolution and the gauge equivariant convolution. Firstly, 
we must specify beforehand the tensor types $\rho_{\mathrm{in}}$ and $\rho_{\mathrm{out}}$ corresponding to the input and output of a convolution.
Secondly, the kernel $K$ is upgraded to a smooth matrix-valued map $\mathbb{R}^2 \longrightarrow \mathbb{R}^{d_{\mathrm{out}} \times d_{\mathrm{in}}}$ with compact support, where $d_{\mathrm{in}}$ and $d_{\mathrm{out}}$ are the dimensions of the tensor spaces $\mathfrak{T}_{\mathrm{in}}M\vert_p$ and $\mathfrak{T}_{\mathrm{out}}M\vert_p$ in which the input and output features lie, respectively.
Thirdly, we must parallel transport the features in a given patch to a common tensor space so that operations such as feature addition become meaningful.

With these differences in mind, let $f_{\mathrm{in}}$ be a feature map of type $\rho_{\mathrm{in}}$ and $K: \mathbb{R}^2 \longrightarrow \mathbb{R}^{d_{\mathrm{out}} \times d_{\mathrm{in}}}$ a kernel as above. We remind the reader that the Riemannian exponential map at a point $p \in M$ is a map $\operatorname{exp}_p : T_pM \longrightarrow M$ taking a tangent vector $\mathbf{v} \in T_pM$ to the point $\operatorname{exp}_p(\mathbf{v}) \in M$ at which one arrives after following a geodesic with velocity $\mathbf{v}$ for one unit of time. Letting $q_v := \mathrm{exp}_p(w_pv)$, the convolved feature map $f_{\mathrm{out}} = K \star f_{\mathrm{in}}$ is given pointwise by
\begin{equation}
f_{\mathrm{out}}(p) := \int\limits_{\mathbb{R}^2}K(v)\rho_{\mathrm{in}}(t_{p \gets q_v})f_{\mathrm{in}}(q_v)\,dv,
\label{eq:1}
\end{equation}
where $t_{p \gets q_v}$ denotes the $\mathsf{SO}(2)$-valued gauge transformation taking the frame on $q_v$ (after parallel transport to $p$) to the frame on $p$. Note that the action of $\rho_{\mathrm{in}}(t_{p \gets q_v})$ on $f_{\mathrm{in}}(q_v)$ subsumes the familiar translational shift $f(t - \tau)$ seen in the classical Euclidean convolution. In \cite{Cohen-Weiler-Kicanaoglu-Welling2019}, it was shown that (\ref{eq:1}) is equivariant to a gauge transformation at $p$ if and only if $K$ is $\mathsf{SO}(2)$\textit{-steerable}, i.e. $K$ satisfies
\begin{equation}
    K(t^{-1}v) = \rho_{\mathrm{out}}(t^{-1})K(v)\rho_{\mathrm{in}}(t)
\label{eq:2}
\end{equation}
for all $t \in \mathsf{SO}(2)$ and $v \in \mathbb{R}^2$. It is crucial to note that in the case where $f_{\mathrm{in}}$ and $f_{\mathrm{out}}$ are scalar-valued feature maps, both $\rho_{\mathrm{in}}$ and $\rho_{\mathrm{out}}$ are trivial, meaning equation (\ref{eq:2}) reduces to $K(t^{-1}v) = K(v)$, i.e. we are constrained to isotropic kernels. This motivates the need to generalize to tensor-valued features so that we can detect anisotropy. We conclude with a critical result describing the relation between local equivariance w.r.t. gauge transformations and global equivariance w.r.t. isometries.
\begin{theorem} [Theorem 8.11, \cite{Weiler2021}] Convolutions (as defined in $\mathrm{(\ref{eq:1})}$) with $\mathsf{SO}(2)$-steerable kernels are equivariant w.r.t. the action of orientation preserving isometries $\phi \in \mathrm{Isom}_{+}(M)$.
\label{theorem:1}
\end{theorem}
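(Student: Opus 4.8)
The plan is to establish the two-sided intertwining $K\star(\phi\cdot f_{\mathrm{in}}) = \phi\cdot(K\star f_{\mathrm{in}})$ for every $\phi\in\mathrm{Isom}_+(M)$ directly from the definition $(\ref{eq:1})$, by combining a single change of variables with three structural facts about isometries and the steerability constraint $(\ref{eq:2})$. The first task is to fix how $\phi$ acts on a type-$\rho$ feature map. The differential $d\phi_p:T_pM\to T_{\phi(p)}M$ is a linear isometry, so, read off in the chosen gauges, the element $g_p := w_{\phi(p)}^{-1}\circ d\phi_p\circ w_p$ lies in $\mathsf{O}(2)$; it lies in $\mathsf{SO}(2)$ precisely because $\phi$ is orientation preserving and the gauges are orthonormal and positively oriented. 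This is the only place orientation preservation enters, and it is indispensable, since $(\ref{eq:2})$ was imposed only for $t\in\mathsf{SO}(2)$. The induced action is the pushforward of the tensor field expressed in coordinates, $(\phi\cdot f)(p) = \rho(g_{\phi^{-1}(p)})\,f(\phi^{-1}(p))$, and a direct computation with the chain rule gives $g_{\phi^{-1}(p)} = h^{-1}$, where $h := w_{\phi^{-1}(p)}^{-1}\circ d(\phi^{-1})_p\circ w_p\in\mathsf{SO}(2)$.

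Second, I would expand the left-hand side. With $q_v := \exp_p(w_p v)$ and $q'_u := \exp_{\phi^{-1}(p)}(w_{\phi^{-1}(p)}u)$, the definition $(\ref{eq:1})$ and the action above give
\[
[K\star(\phi\cdot f_{\mathrm{in}})](p) \;=\; \int_{\mathbb{R}^2} K(v)\,\rho_{\mathrm{in}}(t_{p\gets q_v})\,\rho_{\mathrm{in}}\big(g_{\phi^{-1}(q_v)}\big)\,f_{\mathrm{in}}\big(\phi^{-1}(q_v)\big)\,dv.
\]
By naturality of the Riemannian exponential map, $\phi^{-1}\circ\exp_p = \exp_{\phi^{-1}(p)}\circ\, d(\phi^{-1})_p$, one gets $\phi^{-1}(q_v) = q'_{hv}$, so the substitution $u=hv$ — an orthogonal, hence unit-Jacobian, change of variables — replaces $dv$ by $du$, $f_{\mathrm{in}}(\phi^{-1}(q_v))$ by $f_{\mathrm{in}}(q'_u)$, and $K(v)$ by $K(h^{-1}u)$.

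Third, I need the compatibility of the transition factor $t_{p\gets q_v}$ with $\phi$. Since $\phi$ is an isometry, it preserves the Levi-Civita connection; consequently it maps the geodesic from $q_v$ to $p$ onto the geodesic from $\phi^{-1}(q_v)$ to $\phi^{-1}(p)$ and intertwines parallel transport along the two geodesics via the respective differentials. Tracking frames at the two endpoints yields the cocycle identity $t_{p\gets q_v} = g_{\phi^{-1}(p)}\,t_{\phi^{-1}(p)\gets q'_{hv}}\,g_{\phi^{-1}(q_v)}^{-1}$. Substituting this into the displayed integral, the two $g_{\phi^{-1}(q_v)}$ factors cancel, and after the change of variables the integrand becomes $K(h^{-1}u)\,\rho_{\mathrm{in}}(g_{\phi^{-1}(p)})\,\rho_{\mathrm{in}}(t_{\phi^{-1}(p)\gets q'_u})\,f_{\mathrm{in}}(q'_u)$. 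Applying steerability $(\ref{eq:2})$ with $t=h$ gives $K(h^{-1}u) = \rho_{\mathrm{out}}(h^{-1})K(u)\rho_{\mathrm{in}}(h)$; since $g_{\phi^{-1}(p)}=h^{-1}$, the product $\rho_{\mathrm{in}}(h)\,\rho_{\mathrm{in}}(h^{-1})$ is the identity, so pulling the constant $\rho_{\mathrm{out}}(h^{-1}) = \rho_{\mathrm{out}}(g_{\phi^{-1}(p)})$ out of the integral leaves $\rho_{\mathrm{out}}(g_{\phi^{-1}(p)})\,[K\star f_{\mathrm{in}}](\phi^{-1}(p)) = [\phi\cdot(K\star f_{\mathrm{in}})](p)$, which is the claim.

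I expect the main obstacle to be the cocycle identity for $t_{p\gets q_v}$: the change of variables and the representation-theoretic cancellations are routine once it is in hand, but proving it requires carefully unwinding the definition of $t_{p\gets q_v}$ (parallel transport of the frame at $q_v$ along the connecting geodesic, followed by comparison with the frame at $p$) and using that $\phi$ is a connection-preserving diffeomorphism carrying geodesics to geodesics. A subordinate point deserving a sentence is the restriction to $\mathrm{Isom}_+(M)$ rather than $\mathrm{Isom}(M)$: this is exactly what keeps every induced $g_p$ inside the structure group $\mathsf{SO}(2)$ for which $(\ref{eq:2})$ was formulated.
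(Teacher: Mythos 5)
The paper does not prove this statement at all --- it is imported verbatim as Theorem 8.11 of the cited reference \cite{Weiler2021} and used as a black box --- so there is no in-paper argument to compare against. Your reconstruction is, however, essentially the standard proof given in that reference, and it is sound: the four ingredients you isolate (the induced gauge element $g_p = w_{\phi(p)}^{-1}\circ d\phi_p\circ w_p$ defining the pushforward action on type-$\rho$ fields, naturality of $\exp$ under isometries, the transporter cocycle $t_{p\gets q} = g_{\phi^{-1}(p)}\,t_{\phi^{-1}(p)\gets\phi^{-1}(q)}\,g_{\phi^{-1}(q)}^{-1}$ coming from $d\phi$ intertwining parallel transports, and the steerability constraint applied with $t=h=g_{\phi^{-1}(p)}^{-1}$) are exactly the right ones, the cancellations go through in the order you claim, and you correctly locate the single place where orientation preservation is needed, namely to keep $g_p$ inside $\mathsf{SO}(2)$ so that $(\ref{eq:2})$ is applicable. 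Two points that a fully rigorous write-up would still have to address: first, on a manifold such as $S^2$ no global gauge exists, so the computation must be done in local gauges with the transition functions between charts accounted for (equivalently, one first uses gauge equivariance to know the convolution is well defined independently of the local gauge, and then performs your computation in any convenient gauge covering $p$, $\phi^{-1}(p)$, and the relevant geodesic patches); second, the cocycle identity deserves the one-line derivation $P_{p\gets q} = d\phi_{\phi^{-1}(p)}\circ P_{\phi^{-1}(p)\gets\phi^{-1}(q)}\circ d\phi_{\phi^{-1}(q)}^{-1}$ made explicit, since it is the only nontrivial geometric input. Neither is a gap in the idea, only in the level of detail.
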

Hence, in particular, we have that gauge equivariant convolutions where $G = \mathsf{SO}(2)$ and $M = S^2$ are equivariant to global rotations $\phi \in \mathsf{SO}(3)$.

\subsection{Convolutional Kernel Networks}
\label{section:2.2}
Convolutional kernel networks (CKNs) \cite{Mairal2016} are the culmination of merging traditional CNNs with kernel methods. In the CKN setting, a feature map at the $\ell^{\mathrm{th}}$ layer is modeled as a function $I_{\ell}:\Omega_{\ell}\longrightarrow \mathcal{H}_{\ell}$, where $\Omega_{\ell}$ is some Euclidean domain and $\mathcal{H}_{\ell}$ is a reproducing kernel Hilbert space (RKHS). As in any learning scheme that wishes to exploit the kernel trick, an appropriate positive definite kernel $K_{\ell}: \mathcal{X}_{\ell - 1} \times \mathcal{X}_{\ell - 1} \longrightarrow \mathbb{R}$ must be defined. We take $\mathcal{X}_{\ell}$ to be the space of image patches (of a fixed size) with support in $\Omega_{\ell}$ and $K_{\ell}(\mathbf{x}, \mathbf{x}') :=\vert\vert \mathbf{x} \vert\vert\,\vert\vert\mathbf{x}'\vert\vert\, \kappa_{\ell}(\langle \frac{\mathbf{x}}{\vert\vert\mathbf{x}\vert\vert}, \frac{\mathbf{x}'}{\vert\vert\mathbf{x}'\vert\vert} \rangle)$, where $\kappa_{\ell}$ is a nice $\mathbb{R}$-valued function. For instance, if we are dealing with $3 \times 3$ patches of a 2D greyscale image, then $\mathcal{X} = \mathbb{R}^{3\cdot 3 \cdot 1} = \mathbb{R}^9$. Recall that a positive definite kernel $K_{\ell}$ induces an embedding $\varphi_{\ell}(\mathbf{x}) = K_{\ell}(\mathbf{x}, -)$ taking image patches $\mathbf{x} \in \mathcal{X}_{\ell - 1}$ into the RKHS $\mathcal{H}_{\ell}$. Instead of learning a set of weights, a CKN learns a finite-dimensional subspace $\mathcal{F}_{\ell} \subset \mathcal{H}_{\ell}$ for each $\ell$ such that the projection residuals (projection onto $\mathcal{F}_{\ell}$) of the embedded $\varphi_{\ell}(\mathbf{x})$ are minimized. The ability to choose $\kappa_{\ell}$ to our liking is the main property of CKNs that we will leverage in \ref{section:3.2}.

\section{Methodology} \label{methods}
We begin by generalizing the theory of first order gauge equivariant convolutions to its higher order analogue. Then, we describe how the classical Volterra series can be recast into the kernel convolution framework.

\subsection{Higher Order Gauge Equivariant Convolutions}
Let $M$ be a connected, geodesically complete Riemannian 2-manifold as before. We avoid unnecessary generalization by maintaining that $G = \mathsf{SO}(2)$. However, the following definitions and results can be modified for arbitrary structure groups $G \subseteq \mathsf{GL}(2, \mathbb{R})$.
\begin{definition} A $k^{\mathrm{th}}$ order kernel $K^{(k)}$ of type $(\rho_{\mathrm{out}}, \rho_{\mathrm{in}})$ is a smooth map 
\begin{equation}
K^{(k)}:\bigoplus_{i=1}^k\mathbb{R}^2\longrightarrow \mathbb{R}^{d_{\mathrm{out}} \times d_{\mathrm{in}}^k}.
\end{equation}
The symbol $\oplus$ denotes the direct sum operation, e.g., $\mathbb{R} \oplus \mathbb{R}=\mathbb{R}^2$.
We are treating $\mathbb{R}^{d_{\mathrm{out}} \times d_{\mathrm{in}}^k}$ as a coordinatization of the space of linear maps $\mathrm{Hom}(\mathfrak{T}_{\mathrm{in}}M\vert_p^{\otimes k}, \mathfrak{T}_{\mathrm{out}}M\vert_p)$ w.r.t. a chosen frame, where $\mathfrak{T}_{\mathrm{in}}M\vert_p$ and $\mathfrak{T}_{\mathrm{out}}M\vert_p$ are the tensor spaces in which the input and output features lie, respectively. 
%$Hom(\cdot, \cdot)$ here denotes a homomorphism from the first to the second argument.
\end{definition}

\begin{definition}
A $k^{\mathrm{th}}$ order kernel $K^{(k)}$ of type $(\rho_{\mathrm{out}}, \rho_{\mathrm{in}})$ is said to be $\mathsf{SO}(2)$-\textit{steerable} iff it satisfies a $k^{\mathrm{th}}$ order steerability constraint given by
%\begin{equation}
\begin{equation}
K^{(k)}(t^{-1}v_1, \ldots, t^{-1}v_k) =  \rho_{\mathrm{out}}(t^{-1})K^{(k)}(v_1, \ldots, v_k) 
\rho_{\mathrm{in}}^{\otimes k}(t) 
\label{eq:3}
\end{equation}
for all $t \in \mathsf{SO}(2)$ and $v_1, \ldots, v_k \in \mathbb{R}^2$.
Here, $\rho_{\mathrm{in}}^{\otimes k}$ denotes the $k$-fold tensor product representation.
\end{definition}

\begin{definition}
Let $\mathcal{K} = \{K^{(1)}, \ldots, K^{(m)}\}$ be a collection of kernels, where $K^{(k)}$ is an $\mathsf{SO}(2)$-steerable $k^{\mathrm{th}}$ order kernel of type $(\rho_{\mathrm{out}}, \rho_{\mathrm{in}})$ for $k = 1, \ldots, m.$ Letting $q_v := \mathrm{exp}_p(w_pv)$, the $m^{\mathrm{th}}$ order expansion of a feature map $f_{\mathrm{in}}$ is given by $V^m_{\mathcal{K}}(f_{\mathrm{in}}):=\sum_{k=1}^mK^{(k)} \star f_{\mathrm{in}}$, where
\begin{equation}
(K^{(k)}\star f_{\mathrm{in}})(p) :=\underbrace{\int\limits_{\mathbb{R}^2}\mkern -7mu\cdots\mkern -7mu\int\limits_{\mathbb{R}^2}}_{k\,\,\mathrm{times}}K^{(k)}(v_1, \ldots, v_k) \\
 \left(\bigotimes_{i=1}^k\rho_{\mathrm{in}}(t_{p \gets q_{v_i}})f_{\mathrm{in}}(q_{v_i})\right)dv_1\cdots dv_k.
\end{equation}
\label{def:3.3}
\end{definition}
We are now ready to present a theorem on the gauge equivariance of the higher order operator $V_{\mathcal{K}}^m$. Such a result is necessary for two reasons: (a) we would like for the output of $V_{\mathcal{K}}^m$ to be independent of the coordinatization of its input and (b) we want $V_{\mathcal{K}}^m$ to further enjoy $\mathsf{SO}(3)$-equivariance in the case where $M = S^2$ (cf. Theorem \ref{theorem:1}).
\begin{theorem}
$V^m_{\mathcal{K}}$ is gauge equivariant. That is, if $w_p$ and $\widetilde{w}_p$ are two pointwise gauges at $p \in M$ related by an $\mathsf{SO}(2)$-valued gauge transformation $\widetilde{w}_p = w_p \circ t$, then $V^m_{\mathcal{K}}(f_{\mathrm{in}})(p)$ transforms as $\rho_{\mathrm{out}}(t^{-1})V^m_{\mathcal{K}}(f_{\mathrm{in}})(p)$.
\end{theorem}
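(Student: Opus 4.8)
The plan is to reduce the claim to a single summand and then run, with $k$ factors carried along through a tensor product, the same change-of-variables argument that establishes gauge equivariance in the first-order case \cite{Cohen-Weiler-Kicanaoglu-Welling2019}. Since $V^m_{\mathcal{K}}(f_{\mathrm{in}}) = \sum_{k=1}^m K^{(k)}\star f_{\mathrm{in}}$ and left multiplication by $\rho_{\mathrm{out}}(t^{-1})$ is linear, it suffices to show that each $(K^{(k)}\star f_{\mathrm{in}})(p)$ picks up a factor $\rho_{\mathrm{out}}(t^{-1})$ when the gauge at $p$ is changed from $w_p$ to $\widetilde{w}_p = w_p\circ t$. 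Only the gauge at $p$ is altered, so the gauges at the integration sites $q_v$ (which coincide with $p$ only on the null set $\{v=0\}$) are untouched and $f_{\mathrm{in}}$ is unaffected.

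First I would record how the two gauge-dependent ingredients of Definition \ref{def:3.3} respond to $w_p\mapsto w_p\circ t$. Since $q_v := \exp_p(w_p v)$, the sample point becomes $\exp_p\big(w_p(tv)\big) = q_{tv}$. Writing the transition map as $t_{p\gets q_v} = w_p^{-1}\circ P_{p\gets q_v}\circ w_{q_v}$, where $P_{p\gets q_v}$ is parallel transport along the radial geodesic from $q_v$ to $p$, it becomes $\widetilde{w}_p^{-1}\circ P_{p\gets q_v}\circ w_{q_v} = t^{-1}\,t_{p\gets q_v}$. Substituting these into the $k$-fold integral of Definition \ref{def:3.3} and then changing variables $u_i := t v_i$ for each $i$ (with unit Jacobian, as $t\in\mathsf{SO}(2)$ is orthogonal) rewrites the convolution computed in the new gauge as
\begin{equation*}
\underbrace{\int\limits_{\mathbb{R}^2}\mkern -7mu\cdots\mkern -7mu\int\limits_{\mathbb{R}^2}}_{k\,\,\mathrm{times}} K^{(k)}(t^{-1}u_1,\dots,t^{-1}u_k)\left(\bigotimes_{i=1}^k\rho_{\mathrm{in}}\big(t^{-1}\,t_{p\gets q_{u_i}}\big)\,f_{\mathrm{in}}(q_{u_i})\right)du_1\cdots du_k .
\end{equation*}

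Next I would invoke the $k^{\mathrm{th}}$ order steerability constraint $(\ref{eq:3})$ to replace $K^{(k)}(t^{-1}u_1,\dots,t^{-1}u_k)$ by $\rho_{\mathrm{out}}(t^{-1})\,K^{(k)}(u_1,\dots,u_k)\,\rho_{\mathrm{in}}^{\otimes k}(t)$, and then use that $\rho_{\mathrm{in}}$ is a homomorphism together with the mixed-product identity $(A_1\otimes\cdots\otimes A_k)(x_1\otimes\cdots\otimes x_k) = (A_1 x_1)\otimes\cdots\otimes(A_k x_k)$ to factor the tensor product of transported features as $\rho_{\mathrm{in}}^{\otimes k}(t^{-1})\big(\bigotimes_{i=1}^k\rho_{\mathrm{in}}(t_{p\gets q_{u_i}})f_{\mathrm{in}}(q_{u_i})\big)$. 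Since $\rho_{\mathrm{in}}^{\otimes k}$ is itself a representation, $\rho_{\mathrm{in}}^{\otimes k}(t)\,\rho_{\mathrm{in}}^{\otimes k}(t^{-1}) = \mathrm{Id}$; pulling the constant matrix $\rho_{\mathrm{out}}(t^{-1})$ out of the integral then leaves exactly $(K^{(k)}\star f_{\mathrm{in}})(p)$ computed in the original gauge, so the new value is $\rho_{\mathrm{out}}(t^{-1})(K^{(k)}\star f_{\mathrm{in}})(p)$. Summing over $k=1,\dots,m$ gives the theorem.

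The change of variables, extracting the constant factor, and the Kronecker-product manipulations are routine. The step requiring the most care is verifying the transformation law $\widetilde{t}_{p\gets q_v} = t^{-1}\,t_{p\gets q_v}$ for the transition maps and checking that it dovetails precisely with the substitution $u_i = t v_i$, so that the spurious copies of $t$ cancel against the $\rho_{\mathrm{in}}^{\otimes k}(t)$ produced by steerability; for $k=1$ this is the heart of the argument in \cite{Cohen-Weiler-Kicanaoglu-Welling2019}, and the one genuinely new point is that the $k$ tensor factors decouple cleanly, which is exactly what the mixed-product identity and the multiplicativity of $\rho_{\mathrm{in}}^{\otimes k}$ provide.
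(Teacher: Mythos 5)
Your proposal is correct and follows essentially the same route as the paper's proof: reduce to a single order $k$ by linearity, apply the transformation laws $v \mapsto t^{-1}v$ and $t_{p\gets q_v}\mapsto t^{-1}t_{p\gets q_v}$ (with unit Jacobian since $\det t = 1$), invoke the steerability constraint $(\ref{eq:3})$, cancel $\rho_{\mathrm{in}}^{\otimes k}(t)\rho_{\mathrm{in}}^{\otimes k}(t^{-1})$, and pull $\rho_{\mathrm{out}}(t^{-1})$ out of the integral. Your explicit derivation of the transporter's transformation law from $t_{p\gets q_v}=w_p^{-1}\circ P_{p\gets q_v}\circ w_{q_v}$ and your use of the mixed-product identity to decouple the $k$ tensor factors only make explicit steps the paper leaves implicit.
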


\vspace{-1ex}
\begin{proof}
We will sketch the proof here and give the details in the appendix. 
By linearity, it suffices to show that $K^{(k)} \star (\cdot)$ is gauge equivariant for each $k = 1, \ldots, m$. Assuming an initial coordinatization w.r.t. $w_p$, the vector components will transform as $v \mapsto t^{-1}v$ and the parallel transport term will transform as $\rho_{\mathrm{in}}(t_{p \gets q_{v}})f_{\mathrm{in}}(q_{v}) \mapsto \rho_{\mathrm{in}}(t^{-1})\rho_{\mathrm{in}}(t_{p \gets q_{v}})f_{\mathrm{in}}(q_{v})$. After substituting these transformation laws and equation (\ref{eq:3}) into the definition above, the only corrective term that remains is $\rho_{\mathrm{out}}(t^{-1})$, which can then be factored out by linearity.
 \end{proof}

In any application of steerable kernels, the main difficulty arises from generating solutions to the steerability constraint \cite{WeilerCesa}, which is usually done via some combination of analytic and representation-theoretic methods. The most desirable outcome is to obtain a complete basis for the subspace of kernels that are steerable. Our use case will only involve second order expansions of vector-valued feature maps, and thus we make do with the following sufficient condition on second order $\mathsf{SO}(2)$-steerable kernels.

\begin{proposition}
Let $K^{(1)}_{01}$ be a first order $\mathsf{SO}(2)$-steerable kernel of type $(\rho_0, \rho_1)$ and $K^{(1)}_{11}$ a first order $\mathsf{SO}(2)$-steerable kernel of type $(\rho_1, \rho_1)$, where $\rho_0$ is the trivial representation (i.e. $\rho_0(t) = 1$) and $\rho_1$ is the standard representation mapping $t\in\mathsf{SO}(2)$ to the usual $2 \times 2$ rotation matrix. Then, a second order kernel of the form $K^{(2)}_{11}(v_1, v_2):= K^{(1)}_{11}(v_1) \otimes K^{(1)}_{01}(v_2)$ is $\mathsf{SO}(2)$-steerable (i.e. satisfies eq. $\mathrm{(\ref{eq:3})}$) of type $(\rho_1, \rho_1)$.
\label{prop:1}
\end{proposition}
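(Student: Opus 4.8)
The plan is to verify the second order steerability constraint (\ref{eq:3}) for $k=2$ directly, by substituting the definition $K^{(2)}_{11}(v_1,v_2) = K^{(1)}_{11}(v_1)\otimes K^{(1)}_{01}(v_2)$ and feeding in the first order steerability of each factor. First I would restate the two hypotheses in the form (\ref{eq:2}): the type-$(\rho_1,\rho_1)$ kernel satisfies $K^{(1)}_{11}(t^{-1}v)=\rho_1(t^{-1})K^{(1)}_{11}(v)\rho_1(t)$, and the type-$(\rho_0,\rho_1)$ kernel satisfies $K^{(1)}_{01}(t^{-1}v)=\rho_0(t^{-1})K^{(1)}_{01}(v)\rho_1(t)=K^{(1)}_{01}(v)\rho_1(t)$, the last step because $\rho_0$ is trivial. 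I would also pin down the shapes: $K^{(1)}_{11}(v)$ is $2\times 2$ and $K^{(1)}_{01}(v)$ is $1\times 2$, so $K^{(2)}_{11}(v_1,v_2)$ is $2\times 4 = d_{\mathrm{out}}\times d_{\mathrm{in}}^2$ with $d_{\mathrm{in}}=d_{\mathrm{out}}=2$; its output transforms under $\rho_1\otimes\rho_0$ and its pair of inputs under $\rho_1\otimes\rho_1$, so the assertion to be proved is exactly (\ref{eq:3}) with $\rho_{\mathrm{out}}=\rho_1\otimes\rho_0$ and $\rho_{\mathrm{in}}=\rho_1$.

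The one substantive ingredient is the mixed-product (functoriality) identity for the Kronecker/tensor product of linear maps, $(A_1A_2A_3)\otimes(B_1B_2B_3)=(A_1\otimes B_1)(A_2\otimes B_2)(A_3\otimes B_3)$ whenever the composites are defined. Applying the two first order laws,
\begin{align*}
K^{(2)}_{11}(t^{-1}v_1,t^{-1}v_2)
&=K^{(1)}_{11}(t^{-1}v_1)\otimes K^{(1)}_{01}(t^{-1}v_2)\\
&=\bigl(\rho_1(t^{-1})K^{(1)}_{11}(v_1)\rho_1(t)\bigr)\otimes\bigl(\rho_0(t^{-1})K^{(1)}_{01}(v_2)\rho_1(t)\bigr),
\end{align*}
and the identity (with the $A_i$ the three factors on the first slot and the $B_i$ those on the second) regroups this as $\bigl(\rho_1(t^{-1})\otimes\rho_0(t^{-1})\bigr)\bigl(K^{(1)}_{11}(v_1)\otimes K^{(1)}_{01}(v_2)\bigr)\bigl(\rho_1(t)\otimes\rho_1(t)\bigr)$. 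Recognizing $\rho_1(t)\otimes\rho_1(t)=\rho_1^{\otimes 2}(t)$ and $K^{(1)}_{11}(v_1)\otimes K^{(1)}_{01}(v_2)=K^{(2)}_{11}(v_1,v_2)$, this is precisely (\ref{eq:3}); and since $\rho_0$ is the trivial one-dimensional representation, $\rho_1(t^{-1})\otimes\rho_0(t^{-1})$ is, under the canonical isomorphism $\mathbb{R}^2\otimes\mathbb{R}\cong\mathbb{R}^2$, just $\rho_1(t^{-1})$, so the output type is genuinely $\rho_1$ rather than merely isomorphic to it, confirming type $(\rho_1,\rho_1)$.

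I do not anticipate a real obstacle: the argument collapses to a one-line computation once the mixed-product identity is invoked. The only points that warrant care are the bookkeeping of the non-square shape ($1\times 2$ against $2\times 2$) as it is threaded through that identity — this is where an index slip is most likely — and the explicit collapse $\rho_1\otimes\rho_0\cong\rho_1$ needed to state the conclusion as type $(\rho_1,\rho_1)$ on the nose. The routine verification of the mixed-product identity in these rectangular shapes I would defer to the appendix.
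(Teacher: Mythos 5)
Your proof is correct and follows essentially the same route as the paper's: substitute the two first order steerability constraints into the definition of $K^{(2)}_{11}$ and regroup via the mixed-product property of the Kronecker product to obtain $\rho_1(t^{-1})K^{(2)}_{11}(v_1,v_2)\rho_1^{\otimes 2}(t)$. Your extra care about the rectangular shapes and the explicit collapse $\rho_1\otimes\rho_0\cong\rho_1$ (since $\rho_0$ is the trivial one-dimensional representation) is a welcome clarification but does not change the argument.
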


\begin{proof}
    This follows from applying equation (\ref{eq:3}) and the mixed-product property. See the appendix in the supplementary material for more detail.
\end{proof}

\vspace{-2pt}
Using Proposition \ref{prop:1}, we can generate a set $\mathcal{B}$ of second order $\mathsf{SO}(2)$-steerable kernels from the first order solutions obtained in \cite{WeilerCesa} to map $\rho_1$ features to $\rho_1$ features. Note that \ref{prop:1} can be modified to account for the other relevant cases (e.g. $\rho_1 \to \rho_0$, etc.). We then parameterize the space of second order $\mathsf{SO}(2)$-steerable kernels w.r.t $\mathcal{B}$, learning only the scalar coefficients. %The linear independence of $\mathcal{B}$ can be shown using a generalized Wronskian \cite{WOLSSON198973}.
%Wronskian for $2$ real-valued functions, $f,g$ is defined as the determinant $W(f, g)= fg' - gf'$ and in general for $n$ real-valued $(n-1)$ differentiable functions is the determinant whose first row contains the $n$ functions, the second row contains the first derivatives of these functions and so on to $(n-1)^{th}$ derivative. 

\subsection{Higher Order Convolutional Kernel Networks}
\label{section:3.2}
The traditional convolution operator $H_1$ admits a natural generalization called the \textit{Volterra series} operator \cite{volterra2005theory}, which for $1$-dimensional signals $f(x)$ is given by $Vf(x) = \sum_{k=0}^{\infty}H_kf(x)$, where
\begin{equation}
H_kf(x)= 
\underbrace{\int \mkern -5mu \cdots \mkern -5mu \int}_{k\,\, \mathrm{times}}h^{(k)}(\tau_1, \ldots, \tau_k)f(x - \tau_1)\cdots f(x - \tau_k) \,d\tau_1\cdots d\tau_k
\label{eq:4}
\end{equation}

and $H_0$ is some constant. Note how Definition \ref{def:3.3} reduces to (\ref{eq:4}) in the case of scalar-valued feature maps on Euclidean space. The higher order terms ($k > 1$) can model nonlinear interactions within a given receptive field \cite{Banerjee-TPAMI20,zoumpourlis2017non}. The problem with directly applying equation (\ref{eq:4}) is that higher order cross-correlations quickly become computationally intractable as we increase the dimension of the input signals and the degree of the expansion. Moreover, the Volterra kernels $h^{(k)}$ contain many redundancies in the typical case where we assume $h^{(k)}$ to be symmetric, i.e. $h^{(k)}(\tau_1,\ldots,\tau_k) = h^{(k)}(\tau_{\sigma(1)}, \ldots, \tau_{\sigma(k)})$ for any permutation $\sigma$.

Kernel methods offer a solution to these issues. In \cite{Franz-Scholkopf}, it was shown that the estimation of $V^{(m)}f(x) = \sum_{k=0}^m H_kf(x)$ is equivalent to optimizing against the hypothesis
$\widehat{V}^{(m)}f(x) = \sum_{j=1}^N \gamma_jk^{(m)}(f(x), f(x_j))$, where $N$ is the number of training samples, $\gamma_j$ are learnable coefficients, $x_j$ is the $j^{\mathrm{th}}$ training sample, and $k^{(m)}$ is a reproducing kernel of the form $k^{(m)}(f(x_1), f(x_2)) = (c + f(x_1)^{\top}f(x_2))^m$, for some constant $c$. Since we would like $f(x)$ to be representative of an entire dMRI volume, we hierarchically construct the features $f(x_j)$ using the CKN formulation given in \ref{section:2.2} with polynomial kernels $\kappa_{\ell}$. This yields an end-to-end implementation of an (approximate) $m^{\mathrm{th}}$ order Volterra expansion for arbitrary $m$ that {\it relies on the same number of learnable parameters as would a first order CNN}, since the only modification we make is to the degree of $\kappa_{\ell}$.
\vspace{2pt}
%\smallskip
%\vspace*{-15pt}
%\smallskip

\section{Network Operations}\label{arch}

In this section, we present the details of the GEVNet architecture. The CKN architecture is described in detail in \cite{Mairal2016}, so we omit it as it is not the novelty of this work. 
%Instead we will simply provide the parameter details for CKN.  

\begin{figure}[h]
\centering
\includegraphics[width=0.75\linewidth]{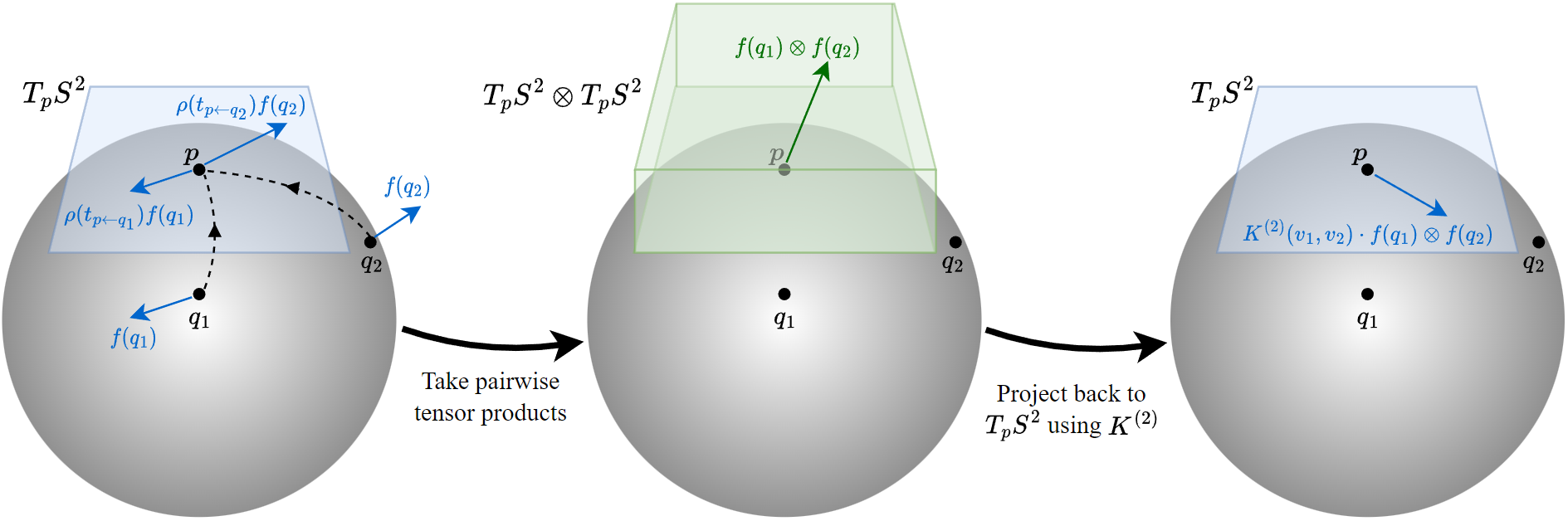}
\caption{A high level illustration of the second order gauge equivariant convolution on vector fields. We abbreviate $\rho(t_{p \gets q_1})f(q_1) \otimes \rho(t_{p \gets q_2})f(q_2)$ as $f(q_1) \otimes f(q_2)$.}
\label{gevconv_fig}
\end{figure}

{\bf Convolution:} Just as in \cite{PimDeHaan2020, WeilerCesa}, we view a feature map $f$ as being decomposed by the irreducible representations of its tensor type $\rho$. That is, since $G = \mathsf{SO}(2)$, we think of a feature $f(p)$ as transforming via a representation of the form $\rho_0 \oplus \rho_1 \oplus \cdots \oplus \rho_{\ell}$ for some bandlimit $\ell$, where $\rho_i$ is the $i^{\mathrm{th}}$ Fourier mode. The GEVNet computes second order gauge equivariant convolutions on the sphere $S^2$ with $\ell= 1$, meaning our hidden layers will have feature maps that transform according to $\rho_0 \oplus \rho_1$, i.e. we associate a scalar and a tangent vector to each $p \in S^2$, where of course we allow for multiple such pairs determined by the number of channels. To do this, we make the simplifying assumption that the pairwise tensor products $f(v_1) \otimes f(v_2)$ for all $v_1, v_2 \in T_pS^2$ only occur across tensor features of the same frequency (we do not distribute tensor products over direct sums). Explicitly, if $f(v_1) = s_1 \oplus r_1$ and $f(v_2) = s_2 \oplus r_2$, where $s_i$ is a scalar and $r_i$ is a tangent vector, then we stipulate that $\boxed{f(v_1) \otimes f(v_2):= (s_1 \otimes s_2) \oplus (r_1 \otimes r_2)}$. Under this assumption, a second order kernel $K^{(2)}(v_1, v_2)$ of type $(\rho_0 \oplus \rho_1, \rho_0 \oplus \rho_1)$ will take the form of (\ref{eqn:7}),
\begin{wrapfigure}{l}{.5\textwidth}
\begin{equation}
%K^{(2)}(v_1, v_2) = \\
\begin{pmatrix}
    K_{00}^{(2)}(v_1, v_2)^{{\color{red} 1 \times 1}} & K_{01}^{(2)}(v_1, v_2)^{{\color{red} 1 \times 4}} \\
    K_{10}^{(2)}(v_1, v_2)^{{\color{red} 2 \times 1}} & K_{11}^{(2)}(v_1, v_2)^{{\color{red} 2 \times 4}}
\end{pmatrix}
\label{eqn:7}
\end{equation}
\end{wrapfigure}
where $K^{(2)}_{ij}$ is the learned linear combination of second order basis kernels of type $(\rho_i, \rho_j)$, generated using \ref{prop:1}. Each matrix block in (\ref{eqn:7}) is annotated with its size in red. We then contract $K^{(2)}(v_1, v_2)$ with the input feature map's pairwise interaction $f(v_1) \otimes f(v_2)$ using a PyTorch Einstein summation operation, and we add this to the result of passing $f$ through a first order gauge equivariant convolution. Figure \ref{gevconv_fig} illustrates the mechanics of a second order convolution on a pair of neighboring feature vectors. Our implementation of the convolution is largely inspired by \cite{PimDeHaan2020}. In particular, we modify the precomputed steerable kernel using a quadrature interpolation scheme to homogenize image patches, since not all points on a spherical grid will possess the same number of adjacent neighbors.

{\bf Nonlinearity:} We use a regular nonlinearity as described in \cite{dehaan2021}. Once again treating each $f(p)$ as the coefficients of Fourier modes of a periodic function with bandlimit $\ell = 1$, we perform an inverse Fourier transform to yield $N$ spatial samples at each $p \in S^2$. We then apply ReLU/BatchNorm operations in the spatial domain, before returning to a new feature via the Fourier transform.

{\bf Pooling:} Consider a high resolution grid $\mathcal{G}_{h}$ and a low resolution grid $\mathcal{G}_{l}$. The pooled feature $f_{\mathrm{avg}}(p)$ at a point $p \in \mathcal{G}_{l}$ is obtained by (a) considering the embedding of $p$ in $\mathcal{G}_{h}$, (b) computing $p$'s high resolution neighbors $q_i \in \mathcal{G}_{h}$, (c) parallel transporting and reorienting the input features $f(q_i)$ to the frame at $p$, and (d) averaging the transported features. In practice, the parallel transport terms involved in the pooling are precomputed during model initialization. 

%\subsection{CKN Architecture}

%For a detailed discussion on the CKN architecture with illustrations, we refer the reader to \cite{Mairal2016}.
%Very briefly, the CKN architecture consists of four stages as already described earlier and summarized here, (i) image patch mapping, $\phi_1$, to Hilbert space, (ii) approximate the mapping $\phi_1$ by minimizing the residuals of projection to a finite dimensional subspace parameterized by $\mathbb{R}^{p_1}$ parameters. (iii) Apply the pooling operation discussed earlier. (iv) Build a multilayer representation by stacking and composing the learned kernels. 
\smallskip
\vspace{-10pt}
\section{Experiments}\label{expts}

In this section we present several experiments, beginning with computer vision benchmarks on $S^2$ and ending with an application in neurology. We use the following notation to describe network architectures: $\operatorname{\mathbf{GEVConv}}(c_{\mathrm{in}}^{\rho_{\mathrm{in}}}, c_{\mathrm{out}}^{\rho_{\mathrm{out}}})$ denotes a second order gauge equivariant convolution layer taking in $c_{\mathrm{in}}$  feature maps of type $\rho_{\mathrm{in}}$ and outputting $c_{\mathrm{out}}$  feature maps of type $\rho_{\mathrm{out}}$, while $\operatorname{\mathbf{GEConv}}(c_{\mathrm{in}}^{\rho_{\mathrm{in}}}, c_{\mathrm{out}}^{\rho_{\mathrm{out}}})$ is the analogous first order gauge equivariant layer. We assume that every convolution (except the last) is followed by a regular nonlinearity, so this is tacitly implied in the notation. If we require a pooling layer after the nonlinearity, then we embellish the above with a $\downarrow$. \textit{We remind the reader that the purpose of these experiments is to demonstrate the parameter efficiency of $\operatorname{\mathbf{GEVConv}}$ over $\operatorname{\mathbf{GEConv}}$ and other first order convolution variants, due to the additional expressivity provided by higher order convolution terms.} 
%This parameter efficiency is bourne out in all the tabulated performance results depicted in this section, where \textbf{boldface font} indicates the best performing method.

%For the computer vision application, we apply the GEVnet to (i) spherical MNIST data, (ii) Shrec-17 data and (iii) Atomic Energy data. For the neurology application, we present experiments that employ the macro-micro-architecture combination for discrimination of neurodegerative disorders namely, DLB, AD and PD. In the following, we use the notation $\left[c_1, c_2, \ldots, c_k \right]$ (where $c_i$ are positive integers) to denote a spherical GEVNet with $k - 1$ layers and channel amounts $c_i$. For all such networks, $c_1$ and $c_k$ are scalar-valued feature maps, and the remaining $c_i$
%denote the feature maps at hidden layers with tensor type $\rho_0 \oplus \rho_1$.
\subsection{Spherical MNIST}
\label{mnist_section}

\begin{wrapfigure}{r}{0.38\textwidth}
\vspace{-20pt}
\centering
\includegraphics[width=0.35\textwidth]{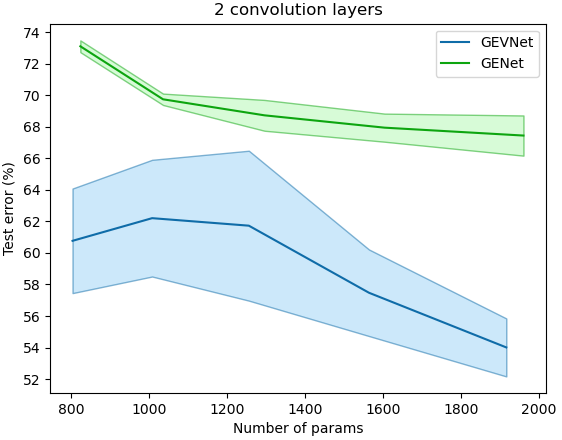}
%\caption{Ablation on the parameter counts of two gauge equivariant models. See %the appendix for a table of all 10 architectural configurations used in this experiment.}
\caption{Ablation on GEVNet versus GENet parameter counts.}
%\vspace{-20pt}
\label{mnist_chart}
\end{wrapfigure}

The spherical MNIST data were generated using the code released in \cite{Cohen-ICLR18}, except we project onto an icospherical grid instead of a Driscoll-Healy grid. Nevertheless, the original input resolutions are comparable. There are three variations of the dataset, namely NR/NR, NR/R, and R/R, where for example NR/R means that the training data has not been rotated post-projection while the test data has been randomly rotated.

To begin, we compare the test performance of a second order GEVNet versus that of a first order GENet on NR/NR \textit{while varying the number of learnable parameters in each model}. We consider three cases: 2-, 3- and 4-layer networks (see appendix for 3 and 4 layers), where for each case we initialize five GEVNet models and five GENet models with progressively increasing parameter counts to compare the trends in classification error. For each choice of GEVNet architecture, we ensure that there is a corresponding GENet with at least as many learnable parameters. For instance, to a 2-layer GEVNet (804 params) with architecture $\operatorname{\mathbf{GEVConv}}(1^{\rho_0}, 2^{\rho_0 \oplus \rho_1})^{\downarrow} \rightarrow \operatorname{\mathbf{GEVConv}}(2^{\rho_0 \oplus \rho_1}, 2^{\rho_0})$ we associate a 2-layer GENet (824 params) with architecture $\operatorname{\mathbf{GEConv}}(1^{\rho_0}, 2^{\rho_0 \oplus \rho_1})^{\downarrow} \rightarrow \operatorname{\mathbf{GEConv}}(2^{\rho_0 \oplus \rho_1}, 4^{\rho_0})$. Such a constraint can be seen as adversarial against GEVNets. For a given number of layers, all models are trained using the same optimization hyperparameters and for the same duration (20 epochs). The plot in Figure \ref{mnist_chart} reports mean test errors across five runs per model and shaded regions convey the standard error of the mean (95\% confidence). Note how in all cases the GEVNet's test error is consistently lower than the GENet's as we vary parameter counts. This suggests that the GEVNet produces a richer class of features in comparision to a GENet for the same number of learnable parameters.

Finally, in Table \ref{mnist_benchmark} we report our best GEVNet model on the standard spherical MNIST benchmarks in comparison to previous works. Our model achieves comparable performances for a far fewer number of parameters than other spherical CNNs.

\begin{table}[t]
    % \begin{footnotesize}
        \begin{minipage}{.55\linewidth}
    
                \begin{center}
                \normalsize
                %\vspace{-3mm}
    
                \caption{Test accuracies (\%) for spherical MNIST classification tasks.}
                \footnotesize
                %\vspace{-3mm}
    
                    \begin{tabular}{lcccc}
        \label{mnist_benchmark}
          Method                                         & NR/NR             & NR/R             & R/R       & Params\\
        \hline
          Cohen et al. \cite{Cohen-ICLR18}               & 95.59               & 93.40           & 94.62    & 58k   \\
          Kondor et al. \cite{CGNet}                     & 96.40               & 96.00           & 96.60    & 256k  \\
          Esteves et al. \cite{esteves2020}              & \textbf{99.37}               & 99.08           & 99.37    & 58k   \\
          Banerjee et al. \cite{Banerjee-TPAMI20}        & 96.72               & 96.10           & 96.71    & 46k   \\
          Cobb et al. \cite{cobb2021efficient}           & 99.35               & \textbf{99.34}           & \textbf{99.38}    & 58k   \\
          \hline
          GENet                                          & 97.30               & 95.86          & 95.99          & 45k    \\
          GEVNet                                         & 98.02               & 96.91           & 97.43          & \textbf{31k}    \\
        \hline
        \end{tabular}
                \end{center}
                %\vspace{-3mm}
    
        \end{minipage}\hspace{.05\linewidth}%
        \begin{minipage}{.4\linewidth}
            \vspace{-.325cm}
                \begin{center}
                \normalsize
                %\vspace{-3mm}
    
                \caption{Test root mean squared errors for QM7 regression task.}
                \footnotesize
                %\vspace{-1mm}
    
                    \begin{tabular}{lcc}
                    \label{qm7_benchmark}
            Method                  & RMSE             & Params \\
            \hline
            Cohen et al. \cite{Cohen-ICLR18}            & 8.47             & 1.4M \\
            Kondor et al. \cite{CGNet}          & 7.97             & 1.1M \\
            Banerjee et al.  \cite{Banerjee-TPAMI20}       & 5.92             & 128k \\
            Cobb et al.  \cite{cobb2021efficient}           & \textbf{3.16}             & 337k \\
            \hline
            GEVNet                  & 5.57              & \textbf{70k}  \\
            \hline
        \end{tabular}
                \end{center}
                %\vspace{-3mm}
    
        \end{minipage} 
    % \end{footnotesize}
    \end{table}

\subsection{Atomic Energy Prediction}

Here we consider the application of a GEVNet to the benchmark QM7 dataset \cite{blum2009970,rupp2012fast}, with the goal of regressing atomization energies of molecules given their consituent atoms' positions and charges. Our setup is identical to that of \cite{Cohen-ICLR18}, using a rotation and translation invariant Coulomb matrix representation. We present our result in Table \ref{qm7_benchmark}. We come in second in RMSE but at a significant reduction of parameters in comparison to all other models.

\subsection{Classification of Neurodegenerative Disorders using Brain dMRIs} 
\label{medical_expt}

{\bf Data Description:} The dMRI data pool we used consisted of brain scans from 85 patients with Lewy Body Disease (DLB), 112 patients with Alzheimer's Disease (AD) and 436 patients with Parkinson's Disease (PD). All the scans were first axis aligned, centered, eddy current corrected, and brain extracted using FSL and pnlNipype \cite{Andersson2016a, pnlNipype}. Since the data were pooled from different magnetic resonance (MR) scanners (Siemens 3T, GE 3T, and Philips 3T) possessing distinct acquisition parameters, the scans underwent a retrospective harmonization step to increase inter-scanner compatability. This was done using the publicly available dMRI harmonization software in \cite{BillahT2019, RathiPaper}. 
%All nonzero b-values were rescaled to have a b-value of 1000 s/mm$^2$. 
After harmonization, the scans were affinely registered to a common MNI (Montreal Neurological Institute) space and downsampled to a voxel size of 2 mm$^3$. 
Finally, the image intensities at a given voxel (to be thought of as a scalar field on $S^2$) were passed through a spherical (radial basis function) interpolation and re-sampled onto a Healpix grid \cite{deepsphere_iclr} with 192 grid points. This corresponds to 96 distinct magnetic field gradient directions due to antipodal symmetry. 

Since the amount of raw data is relatively small (especially for AD and DLB), we set up five (80/20 split) train/test folds for each of the three possible classification problems (AD v. DLB, AD v. PD, and DLB v. PD) to obtain averaged performance estimates. To mitigate class imbalance, we then augment each training set using a data synthesis technique called \textit{mixup} \cite{Zhang2018ICLR} such that there are 400 training samples per class.

%(1) 85 patient with Lewy Body Disease (DLB), 
%(1) 113 patients with Alzheimers Disease, (2) 
%120 with Parkinson's Disease. These dMRI data were acquired using %different MR scanners (Siemens 3Tesla, GE 3Tesla and Phillips %3Tesla)  with distinct acquisition parameters and thus require %data harmonization as a preprocessing step. First, all of the %data were eddy current corrected using FSL software %\cite{Andersson2016a}.  The data are then standardized to the MNI %template using the ANTS software \cite{avants2009advanced}.  %After standardization to MNI space, all of the data were resampled to a voxel size of $(2mm, 2mm, 2mm)$. 
%Then, we harmonized the data using 
%publicly available dMRI-harmonization software \cite{BillahT2019}
%ComBat \cite{Fortin}. In order to use ComBat without committing to the diffusion tensor representation, we harmonize the raw diffusion signals along each diffusion sensitizing gradient direction. 
%Prior to applying harmonization to the data volumes along gradient field directions, we chose 96 magnetic gradient field directions and a b-value of 1000 across all data sets via interpolation when necessary. This harmonized data was the input to our deep network architecture describe in the following paragraph.

%\begin{figure}[h]
%\centering
%\includegraphics[width=\linewidth]{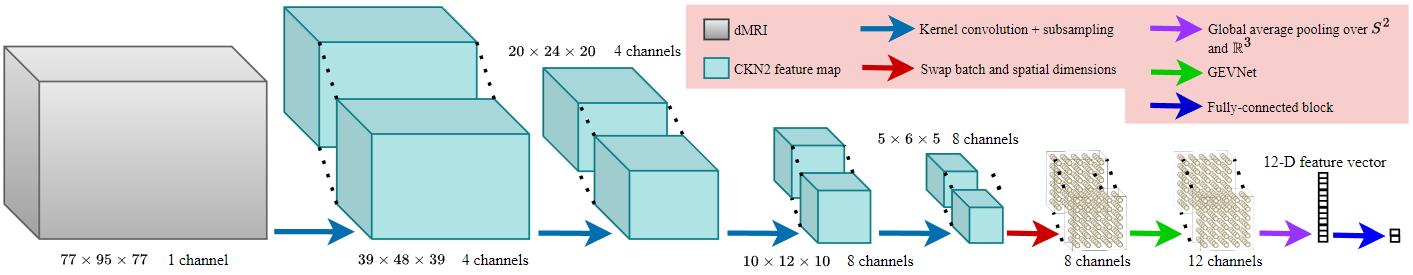}
%\caption{A schematic of our network.}
%\label{network}
%\end{figure}

{\bf Network Architecture:} An input dMRI volume can be seen as 4-dimensional, consisting of a 3D MRI for each diffusion-sensitized gradient direction (i.e. for each $b$-vector). A $b$-vector is the vector pointing in the direction of the applied diffusion-sensitized magnetic field gradient. Alternatively, one can think of a dMRI volume as a 3D lattice, where at each lattice point (voxel) we have a scalar field on $S^2$.

\begin{figure}[t]
\centering
\includegraphics[width=\linewidth]{network.png}
\caption{A schematic of the CKN2 + GEVNet model. The GEVNet is of the form $\operatorname{\mathbf{GEVConv}}(8^{\rho_0}, 8^{\rho_0 \oplus \rho_1}) \rightarrow \operatorname{\mathbf{GEVConv}}(8^{\rho_0 \oplus \rho_1}, 12^{\rho_0 \oplus \rho_1}) \rightarrow \operatorname{\mathbf{GEVConv}}(12^{\rho_0 \oplus \rho_1}, 12^{\rho_0})$.}
\label{network}
\end{figure}

%\begin{wrapfigure}{r}{0.35\textwidth}
%\vspace{-10pt}
%\includegraphics[width=0.35\textwidth]{network.png}
%\caption{A schematic of our network.}
%\vspace{-10pt}
%\label{network}
%\end{wrapfigure}
With this in mind, the data flow for a single dMRI volume is as follows: (1) each 3D volume (corresponding to a given $b$-vector) is passed as input to a CKN whose domain is $\mathbb{R}^3$, which summarizes diffusion phenomena occurring across voxels, (2) the resulting feature maps are concatenated along a fourth dimension, (3) each scalar field (on $S^2$) is passed to a spherical CNN, thus extracting a feature vector that is representative of the intra-voxel diffusion, (4) obtain a single feature vector by global average pooling, and (5) the final feature vector is classified by a fully connected block. Steps 1 and 2 are easily implemented in PyTorch by letting the $b$-vector dimension be the batch dimension. Then, we swap the batch and spatial dimensions before step 3 since the $b$-vector directions become the grid positions on $S^2$. Figure \ref{network} depicts a schematic of this network. Our CKN implementation is an adaptation of \cite{dexiong_code}.

{\bf Ablation Study:} We test the effect of higher order convolutions on three neurodegenerative disorder classification problems: AD v. DLB, AD v. PD, and DLB v. PD. Ablation is performed on the order of the inter-voxel convolution (CKN) by varying the degree of a polynomial kernel, and on the order of the intra-voxel convolution (spherical CNN) by exchanging a second order GEVNet with a first order DeepSphere (DS) \cite{deepsphere_iclr}. The abbreviations CKN1 and CKN2 refer to a CKN with linear and quadratic kernels, respectively. Although not identical, a CKN1 can be viewed as a close approximation to a standard CNN. We also include two additional ablated models: a lone CKN2 that ignores diffusion-sensitized gradient directions by flattening the feature map after the last layer, and a lone GEVNet that ignores diffusion phenomena across voxels.

Each CKN has a \textit{fixed} architecture consisting of a mere 3164 learnable parameters. Both the GEVNet and the DeepSphere consist of three convolution layers, with the GEVNet having 12424 parameters and the DeepSphere having 13696 parameters. All other optimization-related hyperparameters (e.g. learning rate, weight decay, number of epochs) are kept the same across classification tasks, and are provided in detail in the appendix. The test accuracies (averaged over five train/test folds) are presented in Table \ref{medical_table}.

\begin{table}[htbp]
  \centering
  \caption{Test accuracies (\%) on three neuroimaging classification problems.}
    \begin{tabular}{lccccc}
        \hline
          Architecture                & AD v. DLB             & AD v. PD             & DLB v. PD       \\
        \hline
          %ResNet Baseline    & ---               & ---             & ---           \\
          CKN2               & 86.76             & 90.32           & 98.06         \\
          GEVNet               & 77.08           & 72.36           & 81.12          \\
          \hline
          CKN1 + DS                         & 86.76             & 79.16         & 89.24           \\
          CKN1 + GEVNet                     & 90.34             & 86.30         & 93.30        \\
          CKN2 + DS                         & 89.82           & 95.07         & \textbf{98.46}           \\
          CKN2 + GEVNet                     & \textbf{92.86}           & \textbf{98.36}         & 98.27          \\
        \hline
    \end{tabular}
  \label{medical_table}
\end{table}

For a fixed choice of CKN, we find that an accompanying GEVNet outperforms its DS counterpart by a significant margin, except in the case of CKN2 + GEVNet versus CKN2 + DS on DLB v. PD, where the accuracies are highly comparable. This is significant, given that the GEVNet is at an over 1k parameter disadvantage against the DeepSphere. We attribute the exceptional case to the observation that a lone CKN2 is already enough to discriminate DLB v. PD with above 98\% accuracy, leaving little to be gained by a spherical component. Said differently, this suggests that DLB and PD are distinct enough to be separated by macroscale volumetric features, as opposed to microscale diffusion features. Conversely, for a fixed choice of spherical CNN, we find that replacing the accompanying CKN's linear kernel with a quadratic kernel yields a boost in performance. Thus, we see that CKN2 + GEVNet is the overall best classifier, as it performs second order convolutions both across and within voxels.
\section{Conclusions and Outlook}
\label{conc}
In this paper, we generalized the first order gauge equivariant convolution presented in \cite{Cohen-Weiler-Kicanaoglu-Welling2019} to its higher order analogue. Our theoretical contribution applies to arbitrary Riemannian manifolds and to arbitrary tensor fields on such manifolds. Our resulting implementation on $S^2$, the GEVNet, was applied to two computer vision benchmark datasets and to a neuroimaging classification problem as part of a compound architecture (CKN + GEVNet). These experiments firmly indicate that the GEVNet exhibits greater representational capacity than other first order GDL models, without sacrificing the key property of being equivariant to a symmetry group admitted by the base manifold. From these observations we can glean the importance of considering higher order convolutions (either as kernel convolutions or cross-correlations) and their parameter efficiency, especially in settings exhibiting spatially extended nonlinear interactions (e.g. diffusion of water molecules in \ref{medical_expt}) which are difficult to capture by solely relying on pointwise nonlinearities.

 From a stability perspective, it appears that the price we pay to obtain more expressive features with the GEVNet is an increase in test error variance (see the ablation study in \ref{mnist_section}). %Investigating the cause of this phenomenon and methods to tame this relative instability will be the focus of our future work. 
 Furthermore, we are inclined to believe that comparing first and second order networks sheds light on the underlying data's nature, where an inability to improve upon first order performance using higher order convolutions is indicative of a system that does not exhibit spatially extended nonlinear interactions. This is akin to using quadratic basis functions to interpolate samples from a system that is actually linear, i.e. there is little to be gained. Investigating and taming the cause of the increased variance, along with pinpointing the relation between the order of the convolutions and the system being modeled, will be the focus of our future work.

\section*{Disclosure of Funding}
This research was in part funded by the NIH NINDS and NIA via RF1NS121099 to Vemuri.

\medskip

{\small
\bibliographystyle{plainurl}
\bibliography{main}
}

\pagebreak

\renewcommand{\thesection}{\Alph{section}}

%\title{Higher Order Gauge Equivariant CNNs on Riemannian Manifolds and Applications\\{\large Appendix}}

%\maketitle

%\begin{abstract} Here we provide full proofs of the main paper's keystone results. Then, we elaborate on experimental setups and provide additional ablation studies.
%\end{abstract}

\setcounter{section}{0}
\section{Theory}

We use the same notation as described in the body of the paper, but we reiterate it here for convenience. $V^m_{\mathcal{K}}(f_{\mathrm{in}})$ denotes the $m^{\mathrm{th}}$ order expansion of a feature map $f_{\mathrm{in}}$, whose definition entails convolutions of $f_{\mathrm{in}}$ with a collection of $\mathsf{SO}(2)$-steerable kernels $\mathcal{K} = \{K^{(k)}: k = 1, \ldots, m\}$, where $k$ indicates the order of the kernel. $\rho_{\mathrm{in}}$ is a group homomorphism $\mathsf{SO}(2) \longrightarrow \mathsf{GL}(d_{\mathrm{in}}, \mathbb{R})$ that encodes the transformation law of the feature map $f_{\mathrm{in}}$'s coordinatization under $\mathsf{SO}(2)$-valued gauge transformations, where $d_{\mathrm{in}}$ is the dimension of the tensor spaces $\mathfrak{T}_{\mathrm{in}}M\vert_p$ in which individual features $f_{\mathrm{in}}(p)$ lie. Finally, we let $q_v := \operatorname{exp}_p(w_pv)$ for a gauge $w_p: \mathbb{R}^2 \longrightarrow T_pM$, and $t_{p \gets q_v}$ denotes the $\mathsf{SO}(2)$-valued gauge transformation taking the parallel transported frame at $q_v$ to the frame at $p$.

\subsection{Proof of Theorem 3.4}
\begin{manualtheorem}{3.4}
$V^m_{\mathcal{K}}$ is gauge equivariant. That is, if $w_p$ and $\widetilde{w}_p$ are two pointwise gauges at $p \in M$ related by an $\mathsf{SO}(2)$-valued gauge transformation $\widetilde{w}_p = w_p \circ t$, then $V^m_{\mathcal{K}}(f_{\mathrm{in}})(p)$ transforms as $\rho_{\mathrm{out}}(t^{-1})V^m_{\mathcal{K}}(f_{\mathrm{in}})(p)$.
\end{manualtheorem}

\begin{proof}
By linearity, it suffices to show that $K^{(k)} \star (\cdot)$ is gauge equivariant for each $k = 1, \ldots, m$. Let us embellish any coordinate dependent object with a $\sim$ when written w.r.t. $\widetilde{w}_p$. Assuming an initial coordinatization w.r.t. $w_p$, we have that $(\widetilde{K}^{(k)}\star \widetilde{f}_{\mathrm{in}})(p)$
\begin{align*}
 &=\int\limits_{\mathbb{R}^2}\mkern -7mu\cdots\mkern -7mu\int\limits_{\mathbb{R}^2}\widetilde{K}^{(k)}(\widetilde{v}_1, \ldots, \widetilde{v}_k)
 \left(\bigotimes_{i=1}^k\rho_{\mathrm{in}}(t_{p \gets q_{\widetilde{v}_i}})\widetilde{f}_{\mathrm{in}}(q_{\widetilde{v}_i})\right)d\widetilde{v}_1\cdots d\widetilde{v}_k \\
&=\int\limits_{\mathbb{R}^2}\mkern -7mu\cdots\mkern -7mu\int\limits_{\mathbb{R}^2}\widetilde{K}^{(k)}(t^{-1}v_1, \ldots, t^{-1}v_k)
 \left(\bigotimes_{i=1}^k\rho_{\mathrm{in}}(t_{p \gets q_{t^{-1}v_i}})\widetilde{f}_{\mathrm{in}}(q_{t^{-1}v_i})\right)d\widetilde{v}_1\cdots d\widetilde{v}_k \\
 &=\int\limits_{\mathbb{R}^2}\mkern -7mu\cdots\mkern -7mu\int\limits_{\mathbb{R}^2}\rho_{\mathrm{out}}(t^{-1})K^{(k)}(v_1, \ldots, v_k) 
\rho_{\mathrm{in}}^{\otimes k}(t)\rho_{\mathrm{in}}^{\otimes k}(t^{-1})
 \left(\bigotimes_{i=1}^k\rho_{\mathrm{in}}(t_{p \gets q_{v_i}})f_{\mathrm{in}}(q_{v_i})\right)dv_1\cdots dv_k \\
 &= \rho_{\mathrm{out}}(t^{-1})\int\limits_{\mathbb{R}^2}\mkern -7mu\cdots\mkern -7mu\int\limits_{\mathbb{R}^2}K^{(k)}(v_1, \ldots, v_k) 
 \left(\bigotimes_{i=1}^k\rho_{\mathrm{in}}(t_{p \gets q_{v_i}})f_{\mathrm{in}}(q_{v_i})\right)dv_1\cdots dv_k\\
 &= \rho_{\mathrm{out}}(t^{-1})(K^{(k)}\star f_{\mathrm{in}})(p).
\end{align*}
Note that no modification needs to be made to the integration measure under a gauge transformation since $\operatorname{det} t = 1$ for all $t \in \mathsf{SO}(2)$.
\end{proof}

\subsection{Proof of Proposition 3.5}
\begin{manualprop}{3.5}
\label{prop:3.5}
Let $K^{(1)}_{01}$ be a first order $\mathsf{SO}(2)$-steerable kernel of type $(\rho_0, \rho_1)$ and $K^{(1)}_{11}$ a first order $\mathsf{SO}(2)$-steerable kernel of type $(\rho_1, \rho_1)$, where $\rho_0$ is the trivial representation (i.e. $\rho_0(t) = 1$) and $\rho_1$ is the standard representation mapping $t\in\mathsf{SO}(2)$ to the usual $2 \times 2$ rotation matrix. Then, a second order kernel of the form $K^{(2)}_{11}(v_1, v_2):= K^{(1)}_{11}(v_1) \otimes K^{(1)}_{01}(v_2)$ is $\mathsf{SO}(2)$-steerable of type $(\rho_1, \rho_1)$.
\end{manualprop}

\begin{proof}
Recall the mixed-product property: given matrices $\mathbf{A}$, $\mathbf{B}$, $\mathbf{C}$, and $\mathbf{D}$ (of compatible sizes), we have that $(\mathbf{A} \otimes \mathbf{B})(\mathbf{C} \otimes \mathbf{D}) = \mathbf{AC} \otimes \mathbf{BD}$. Letting $t \in \mathsf{SO}(2)$ and letting MP denote the mixed-product property, we have that 
%\begin{gather*}
\begin{align*}
K_{11}^{(2)}(t^{-1}v_1, t^{-1}v_2) &= K^{(1)}_{11}(t^{-1}v_1)\otimes K^{(1)}_{01}(t^{-1}v_2) \\
&=\rho_1(t^{-1})K^{(1)}_{11}(v_1)\rho_1(t) \otimes \rho_0(t^{-1})K^{(1)}_{01}(v_2)\rho_1(t)\tag{by steerability}\\
&=(\rho_1(t^{-1})K^{(1)}_{11}(v_1) \otimes K^{(1)}_{01}(v_2))(\rho_1(t) \otimes \rho_1(t)) \tag{by MP}\\
&=\rho_1(t^{-1})\cdot K^{(1)}_{11}(v_1) \otimes K^{(1)}_{01}(v_2) \cdot \rho_1(t) \otimes \rho_1(t) \tag{by MP}\\
&= \rho_1(t^{-1})K_{11}^{(2)}(v_1, v_2)\rho_1^{\otimes 2}(t).
\end{align*}
\end{proof}

\vspace{-13pt}
Proposition \ref{prop:3.5} allows us to generate second order $\mathsf{SO}(2)$-steerable basis kernels from the first order solutions obtained in \cite{WeilerCesa}. For example, we can construct the (angular) second order basis kernel of type $(\rho_1, \rho_1)$ given by

\[
K^{(2)}_{11}(\theta_1, \theta_2) = K^{(1)}_{11}(\theta_1) \otimes K^{(1)}_{01}(\theta_2) = \begin{pmatrix}\operatorname{cos}2\theta_1 & \operatorname{sin}2\theta_1 \\ \operatorname{sin}2\theta_1 & -\operatorname{cos}2\theta_1\end{pmatrix} \otimes \begin{pmatrix} \operatorname{cos}\theta_2 & \operatorname{sin}\theta_2\end{pmatrix}.
\]

\section{Experiment Details and Addenda}
Our code is currently available as a downloadable ZIP file at \url{https://drive.google.com/file/d/1nELRtKKDqAbTyUD82kf2y6NIRexpYJP_/view?usp=share_link}, but will be streamlined as a public repository in the near future.
\subsection{Spherical MNIST}
We begin by presenting additional plots depicting our ablation study on parameter counts (Figure \ref{ablation_fig}). Note that the plots presented here depict the statistics (mean error and standard error of the mean) obtained over 10 runs per model. Again, we find that the GEVNet exhibits greater representational capacity than its first order analogue in all cases. All models for the ablation study were trained for 20 epochs, with an exponentially decaying initial learning rate of $3\mathrm{e}{-}4$. All regular nonlinearities mapped the hidden feature vectors to $N = 101$ spatial samples, and the precomputed basis kernels were homogenized using a single ring consisting of 1000 quadrature points. We used a batch size of 512, cross entropy loss, and an ADAM optimizer.

%\renewcommand{\thefigure}{5}
%\begin{figure}
%\centering
%\begin{subfigure}{.5\textwidth}
%  \centering
%  \includegraphics[width=.85\linewidth]%{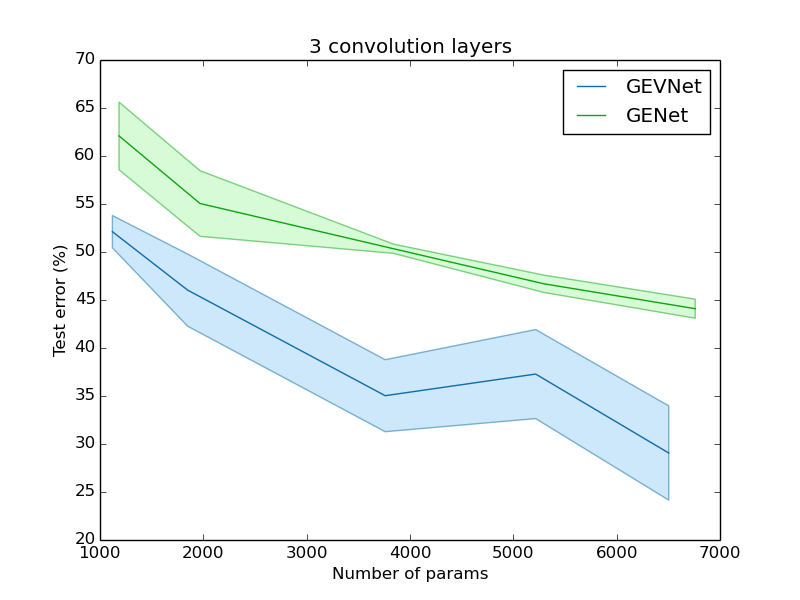}
  %\caption{A subfigure}
%  \label{fig:sub1}
%\end{subfigure}%
%\begin{subfigure}{.5\textwidth}
%  \centering
%  \includegraphics[width=.85\linewidth]%{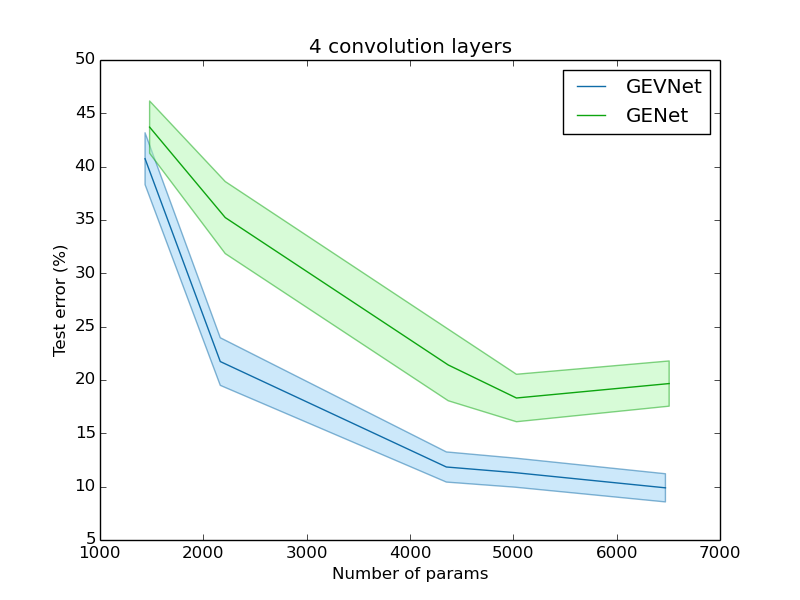}
  %\caption{A subfigure}
%  \label{fig:sub2}
%\end{subfigure}
%\caption{Ablation on GEVNet versus GENet parameter counts for 2-, 3-, and 4-layer networks. Solid lines depict the mean test error over 10 runs per model and shaded regions depict the standard error of the mean.}
%\label{ablation_fig}
%\end{figure}

\renewcommand{\thefigure}{5}
\begin{figure}
\begin{subfigure}{.5\textwidth}
\centering
\includegraphics[width=0.85\linewidth]{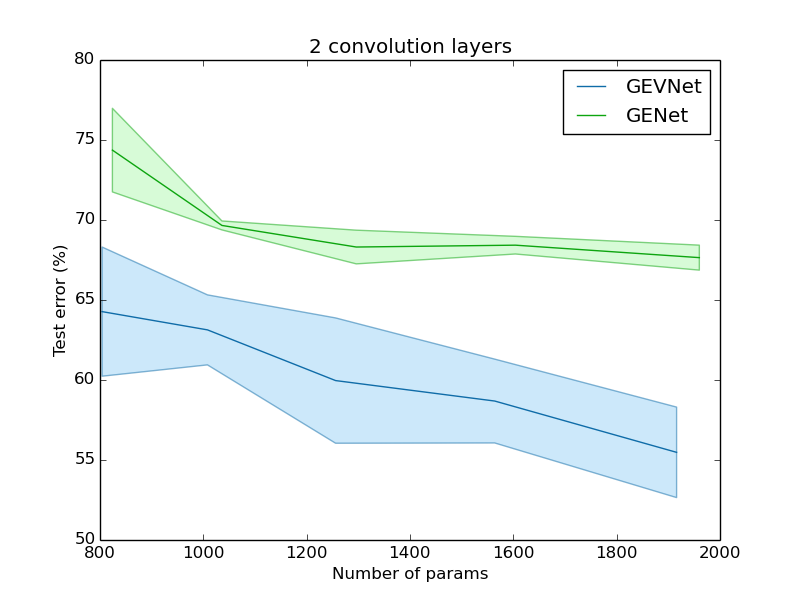}
%\caption{}
\label{fig:sub1}
\end{subfigure}%
\begin{subfigure}{.5\textwidth}
\centering
\includegraphics[width=0.85\linewidth]{3layers_v2.png}
%\caption{}
\label{fig:sub2}
\end{subfigure}\\[1ex]
\begin{subfigure}{\textwidth}
\centering
\includegraphics[width=0.43\linewidth]{4layers_v2.png}
%\caption{}
\label{fig:sub3}
\end{subfigure}
\caption{Ablation on GEVNet versus GENet parameter counts for 2-, 3-, and 4-layer networks. Solid lines depict the mean test error over 10 runs per model and shaded regions depict the standard error of the mean.}
\label{ablation_fig}
\end{figure}

Next, we present the details of the reported GEVNet and GENet architectures on the MNIST benchmark tasks. The GEVNet's architecture (31k params) is given by
\begin{align*}
&\,\,\,\,\,\,\,\,\operatorname{\mathbf{GEVConv}}(1^{\rho_0}, 3^{\rho_0 \oplus \rho_1}) \rightarrow \operatorname{\mathbf{GEVConv}}(3^{\rho_0 \oplus \rho_1}, 3^{\rho_0 \oplus \rho_1})^{\downarrow} \\ &\rightarrow \operatorname{\mathbf{GEVConv}}(3^{\rho_0 \oplus \rho_1}, 8^{\rho_0 \oplus \rho_1})  \rightarrow \operatorname{\mathbf{GEVConv}}(8^{\rho_0 \oplus \rho_1}, 8^{\rho_0 \oplus \rho_1})^{\downarrow} \\ &\rightarrow \operatorname{\mathbf{GEVConv}}(8^{\rho_0 \oplus \rho_1}, 12^{\rho_0 \oplus \rho_1}) \rightarrow \operatorname{\mathbf{GEVConv}}(12^{\rho_0 \oplus \rho_1}, 12^{\rho_0 \oplus \rho_1})^{\downarrow} \\ &\rightarrow \operatorname{\mathbf{GEVConv}}(12^{\rho_0 \oplus \rho_1}, 12^{\rho_0}),
\end{align*}
while the GENet's architecture (45k params) is given by
\begin{align*}
&\,\,\,\,\,\,\,\,\operatorname{\mathbf{GEConv}}(1^{\rho_0}, 10^{\rho_0 \oplus \rho_1}) \rightarrow \operatorname{\mathbf{GEConv}}(10^{\rho_0 \oplus \rho_1}, 10^{\rho_0 \oplus \rho_1})^{\downarrow} \\ &\rightarrow \operatorname{\mathbf{GEConv}}(10^{\rho_0 \oplus \rho_1}, 16^{\rho_0 \oplus \rho_1})  \rightarrow \operatorname{\mathbf{GEConv}}(16^{\rho_0 \oplus \rho_1}, 16^{\rho_0 \oplus \rho_1})^{\downarrow} \\ &\rightarrow \operatorname{\mathbf{GEConv}}(16^{\rho_0 \oplus \rho_1}, 32^{\rho_0 \oplus \rho_1}) \rightarrow \operatorname{\mathbf{GEConv}}(32^{\rho_0 \oplus \rho_1}, 32^{\rho_0 \oplus \rho_1})^{\downarrow} \\ &\rightarrow \operatorname{\mathbf{GEConv}}(32^{\rho_0 \oplus \rho_1}, 32^{\rho_0}).
\end{align*}
Both models were trained for 30 epochs, with an exponentially decaying initial learning rate of $1\mathrm{e}{-}4$. Regular nonlinearities mapped the hidden feature vectors to $N = 51$ spatial samples, and the precomputed basis kernels were homogenized using a single ring consisting of 1000 quadrature points. We used a batch size of 256, cross entropy loss, and an ADAM optimizer.

All MNIST experiments were performed on a single NVIDIA A100 GPU. 

\subsection{Atomic Energy Prediction}
As described in \cite{Cohen-ICLR18}, this task is handled by an architecture consisting of a spherical component followed by an MLP. We do not modify the original MLP in any way, only substituting Cohen's $S^2$CNN with our GEVNet. The GEVNet's architecture is given by
\[
\operatorname{\mathbf{GEVConv}}(5^{\rho_0}, 5^{\rho_0 \oplus \rho_1})^{\downarrow} \rightarrow \operatorname{\mathbf{GEVConv}}(5^{\rho_0 \oplus \rho_1}, 10^{\rho_0 \oplus \rho_1})^{\downarrow} \rightarrow \operatorname{\mathbf{GEVConv}}(10^{\rho_0 \oplus \rho_1}, 10^{\rho_0}).
\]

The model was trained for 30 epochs with a learning rate of $1\mathrm{e}{-}3$ and a weight decay of $1\mathrm{e}{-}2$ using a batch size of 16, MSE loss, and an ADAM optimizer. All regular non-linearities mapped the hidden feature vectors to $N = 101$ spatial samples, and the precomputed basis kernels were homogenized using a single ring consisting of 1000 quadrature points. This experiment was performed on a single NVIDIA A100 GPU.

\subsection{Experimental Details for Classification of Neuro-degenerative Disorders}

All compound models depicted in Table 3 of the main paper were trained for 15 epochs, with an exponentially decaying initial learning rate of $1\mathrm{e}{-}5$ and a weight decay of $1\mathrm{e}{-}5$ using cross entropy loss and an ADAM optimizer. All regular non-linearities mapped the hidden feature vectors to $N = 7$ spatial samples, and the precomputed basis kernels were homogenized using a single ring consisting of 5000 quadrature points. Given the large input image size (approx. 563k voxels), we parallelize the training across 6 NVIDIA A100 GPUs while taking care to use a synchronized batch normalization. The reported DeepSphere's architecture was given by 
\[
\operatorname{\mathbf{DS}}(8, 32) \rightarrow \operatorname{\mathbf{DS}}(32, 32) \rightarrow
\operatorname{\mathbf{DS}}(32, 64).
\]

%%%%%%%%%%%%%%%%%%%%%%%%%%%%%%%%%%%%%%%%%%%%%%%%%%%%%%%%%%%%

\end{document}